\theoremstyle{plain}
\newtheorem{theorem}{Theorem}[section]
\newtheorem{lemma}[theorem]{Lemma}
\newtheorem{proposition}[theorem]{Proposition}
\newtheorem{corollary}[theorem]{Corollary}
\newtheorem*{theorem*}{Theorem}
\newtheorem*{lemma*}{Lemma}
\newtheorem*{proposition*}{Proposition}
\newtheorem*{conjecture*}{Conjecture}
\newtheorem{fact*}{Fact}
\theoremstyle{definition}
\newtheorem{assumption}[theorem]{Assumption}
\newtheorem*{definition*}{Definition}
\newtheorem*{question*}{Question}
\newtheorem*{example*}{Example}
\newtheorem*{remark*}{Remark}
\newtheorem*{remarks*}{Remarks}
\newtheorem*{exercise*}{Exercise}
\newtheorem*{assumption*}{Assumption}
\DeclareMathOperator{\diag}{diag}
\newcommand{\weq}{\ = \ }
\newcommand{\cond}{\, | \,}
\newcommand{\wle}{\ \le \ }
\newcommand{\wge}{\ \ge \ }
\newcommand{\gesim}{\gtrsim}
\newcommand{\cC}{\mathcal{C}}
\newcommand{\cI}{\mathcal{I}}
\newcommand{\cL}{\mathcal{L}}
\newcommand{\cN}{\mathcal{N}}
\newcommand{\cP}{\mathcal{P}}
\newcommand{\R}{\mathbf{R}}
\DeclareMathOperator*{\argmax}{arg\,max}
\DeclareMathOperator*{\argmin}{arg\,min}
\newcommand{\pin}{p_{\rm in}}
\newcommand{\pout}{p_{\rm out}}
\newcommand{\cin}{c_{\rm in}}
\newcommand{\cout}{c_{\rm out}}
\newcommand{\E}{\mathbb{E}}
\newcommand{\pr}{\mathbb{P}}
\newcommand{\hX}{\widehat{X} }
\newcommand{\hZ}{\widehat{Z} }
\newcommand{\bx}{\bar{x} }
\newcommand{\bgamma}{\bar{\gamma} }
\newcommand{\balpha}{\bar{\alpha} }
\newcommand{\bd}{\bar{d} }
\newcommand{\bdelta}{\bar{\delta} }
\newcommand{\bb}{\bar{b} }
\newcommand{\bs}{\bar{s} }
\newcommand{\Ber}{\operatorname{Ber}}
\newcommand{\Unif}{\operatorname{Uni}}
\newcommand{\MAP}{\operatorname{MAP}}
\newcommand{\tv}{\tilde{v}}
\begin{document}

\title{Almost exact recovery in noisy semi-supervised learning}

\author{Konstantin Avrachenkov\footnote{ \texttt{k.avrachenkov@inria.fr}} } 
\author{Maximilien Dreveton\footnote{ \texttt{maximilien.dreveton@epfl.ch} \\ The paper is accepted in \textit{Probability in the Engineering and Informational Sciences}.} }

\affil{Inria Sophia Antipolis, France}
\renewcommand\Authands{ and }

\date{}	
\numberwithin{equation}{section}
	
\maketitle

\begin{abstract}
Graph-based semi-supervised learning methods combine the graph structure and labeled data to classify unlabeled data.
In this work, we study the effect of a noisy oracle on classification. In particular, we derive the Maximum A Posteriori (MAP) estimator for clustering a Degree Corrected Stochastic Block Model (DC-SBM) when a noisy oracle reveals a fraction of the labels. We then propose an algorithm derived from a continuous relaxation of the MAP, and we establish its consistency.
Numerical experiments show that our approach achieves promising performance on synthetic and real data sets, even in the case of very noisy labeled data.

\bigskip

\noindent
{\bf Keywords:} graph clustering, semi-supervised learning, graph-based method, degree corrected stochastic block model, noisy data.

\bigskip

\noindent

{\bf AMS subject classifications:} 62F12, 62H30, 68T10
\end{abstract}

\section{Introduction}
Semi-supervised learning (SSL) aims at achieving superior learning performance by combining unlabeled and labeled data.
Since typically the amount of unlabeled data is large compared to the amount of labeled data, SSL methods are relevant when the performance of unsupervised learning is low, or when the cost of getting a large amount of labeled data for supervised learning is too high.
Unfortunately, many standard SSL methods have been shown to not efficiently use the unlabeled data, leading to unsatisfactory or unstable performance \cite[Chapter~4]{chapelle2006semi}, \cite{Ben-David_Lu_Pal_2008,Cozman_Cohen_2002}. 
Moreover, noise in the labeled data can further degrade the performance. In practice, the noise can come from a tired or non-diligent expert carrying out the labeling task or even from adversarial data corruption.

In this paper, we investigate the problem of graph clustering, where one aims to group the nodes of a graph into different classes. Our working model is the two-class Degree Corrected Stochastic Block Model (DC-SBM), with side information on some node’s community assignment given by a noisy oracle. The DC-SBM was introduced in~\cite{karrer2011stochastic} to account for degree heterogeneity and block structure. Let $n$ be the number of nodes. Each node $i \in [n]$ is given a community label $Z_i \in \{-1,1\}$ chosen uniformly at random and a parameter $\theta_i >0$. Given $Z = (Z_1, \dots, Z_n)$ and $\theta = \left( \theta_1, \dots, \theta_n \right)$, an undirected edge is added between nodes $i$ and $j$ with probability $\min( 1, \theta_i \theta_j \pin)$, if $Z_i = Z_j$, and with probability $\min(1, \theta_i\theta_j \pout)$, otherwise. This model reduces to the standard Stochastic Block Model (SBM) \cite{Abbe_2017} if $\theta_i = 1$ for every node~$i$.
The unsupervised clustering problem consists of inferring the latent community structure~$Z$ given one observation of a DC-SBM graph. We make the problem semi-supervised by introducing a noisy oracle. For every node, this oracle reveals the correct community label with probability~$\eta_1$, a wrong community label with probability~$\eta_0$, and reveals nothing with probability $1-\eta_1-\eta_0$.

We first derive the Maximum A Posteriori (MAP) estimator for SSL-clustering in a DC-SBM graph given the \textit{a priori} information induced by a noisy oracle and graph structure. We note that, despite its simplicity, this result did not appear previously in the literature, neither for a perfect oracle nor for SBM. In particular, we show that the MAP is the solution to a minimization problem that involves a trade-off between three factors: a cut-based term (as in the unsupervised scenario), a regularization term (penalizing solutions with unbalanced clusters), and a loss term (penalizing predictions that differ from the oracle information).

As solving the MAP estimator is NP-hard, we propose a continuous relaxation and derive an SSL version of a spectral method based on the adjacency matrix. We establish a bound on the ratio of misclassified nodes for this continuous relaxation, and we show that this ratio goes to zero under the hypothesis that the average degree diverges and an almost perfect oracle (see Corollary~\ref{cor:almost_exact_recovery} for a rigorous statement). As a result, the proposed SSL method guarantees almost exact recovery (recovering all but $o(n)$ labels when $n$ goes to infinity) even when a part of the side information is incorrect. We note that even though we work with the case of two clusters, most of our results are extendable to the setting of more than two clusters at the expense of more cumbersome notations.

One can make several parallels between our continuous relaxation and state-of-the-art techniques. Indeed, SSL-clustering often relies on minimization frameworks (see \cite{avrachenkov2012generalized,chapelle2006semi} for an overview). The idea of minimizing a well-chosen energy function was proposed in~\cite{zhu2003semi}, under the constraint of keeping the labeled nodes’ predictions equal to the oracle labels. 
As we show in the numerical section, this hard constraint is unsuitable if the oracle reveals false information. Consequently, \cite{belkin2004regularization} introduced an additional loss term in the energy function to allow the prediction to differ from the oracle information. We recover this loss term with an additional theoretical justification because it comes from a relaxation of the MAP.

Moreover, the regularization term is necessary to prevent the solution from being flat and making classification rely on second-order fluctuations. This phenomenon was previously observed by \cite{Nadler2009SemisupervisedLW} in the limit of an infinite amount of unlabeled data, as well as by \cite{mai2018random} in the large dimension limit. The regularization term here consists of subtracting a constant term from all the entries of the adjacency matrix. It resembles previous regularization techniques, like the centring of the adjacency matrix proposed in~\cite{Mai_Couillet_2020}. However, contrary to \cite{Mai_Couillet_2020}, we study a noisy framework without assuming a large-dimension asymptotic regime. 
Moreover, we solve exactly the relaxed minimization problem instead of giving a heuristic with an extra parameter.

It was shown in \cite{Saad_Nosratinia_2018} that
even with a perfect oracle revealing a constant fraction of the labels, the phase transition phenomena for exact recovery in SBM (recovering all the correct labels with high probability) remains unchanged. Thus, for the exact recovery problem, one could discard all the side information and simply use unsupervised algorithms when the number of data points goes to infinity. Of course, wasting potentially valuable information is not entirely satisfactory. Thus, in the present work, we consider the case of almost exact recovery and an oracle with noisy information.
In \cite{banerjee2023pagerank,kadavankandy2017power} criteria different from the exact recovery
have also been considered in the framework of semi-supervised learning.

The paper is structured as follows. We introduce the model and main notations in Section~\ref{section:MAP}, along with the derivation
of the MAP estimator (Section~\ref{subsection:MAP_estimator_DCSBM}). A continuous relaxation of the MAP is presented in Section~\ref{section:continuous_relaxation} as well as the guarantee of its convergence to the true community structure (Subsection~\ref{subsection:ratio_misclassified_nodes}). We postpone some proofs to the Appendix and leave in the main text only those we consider important to the material exposition. We conclude the paper with numerical results (Section~\ref{section:numerical_results}), emphasizing the effect of the noise on the clustering accuracy. 
In particular, we outperform state-of-the-art graph-based SSL methods in a difficult regime (few label points or large noise).

Lastly, the present paper is a follow-up work on~\cite{avrachenkov2019almost}. However, there
are very important developments. In~\cite{avrachenkov2019almost} we have only established almost exact recovery on SBM for Label Spreading~\cite{zhou2004learning} heuristic algorithm with a linear number of labeled nodes (see~\cite[Assumption 3]{avrachenkov2019almost}). In the present work, we extend the analysis to DC-SBM, investigate the effect of noisy labeled data, and allow a potentially sublinear number of labeled nodes. We also add experiments with real and synthetic data that illustrate our theoretical results.

\section{MAP estimator in a noisy semi-supervised setting}\label{section:MAP}

\subsection{Problem formulation and notations}

A homogeneous \textit{degree corrected stochastic block model} (DC-SBM) is parametrized by the number of nodes $n$, two class-affinity parameters $\pin, \pout$, and a pair $(\theta, Z)$ where $\theta \in \R^n$ is a vector of intrinsic connection intensities and $Z \in \{-1,1\}^n$ is the community labeling vector.
Given $(\pin, \pout, \theta, Z)$, the graph adjacency matrix $A = (a_{ij})$ is generated as
\begin{align}
\label{def:DCSBM}
A_{ij} = A_{ji} \sim 
\left\{
\begin{array}{ll}
\Ber \left( \theta_i \theta_j \pin \right), & \qquad \mathrm{if}\quad Z_i = Z_j, \\
\Ber \left( \theta_i \theta_j \pout \right), & \qquad \mathrm{otherwise,} \\
\end{array}
\right.
\end{align}
for $i \not= j$, and $A_{ii} = 0$.
We assume throughout the paper that $Z_i \sim \Unif \left( \{-1,1\} \right) $, and that the entries of $\theta$ are independent random variables satisfying $\theta_i \in [ \theta_{\min}, \theta_{\max} ]$ with $\E \theta_i = 1$, $\theta_{\min} > 0$, and $\theta_{\max}^2 \max(\pin, \pout) \leq 1$.
In particular, when all the $\theta_i$'s are equal to one, the model reduces to the Stochastic Block Model (SBM):
\begin{align}
A_{ij} = A_{ji} \sim 
\left\{
\begin{array}{ll}
\Ber \left( \pin \right), & \qquad \mathrm{if}\quad Z_i = Z_j, \\
\Ber \left( \pout \right), & \qquad \mathrm{otherwise.} \\
\end{array}
\right.
\end{align}

In addition to the observation of the graph adjacency matrix $A$, an oracle gives us extra information about the cluster assignment of some nodes. This can be represented as a vector $s \in \{0,-1,1\}^n$, whose entries $s_{i}$ are independent and distributed as follows:
\begin{align}
\label{eq:def_oracle}
s_{i} \weq  \left\{
\begin{array}{ll}
Z_i, & \qquad \mathrm{with \ probability} \quad \eta_1, \\
-Z_i, & \qquad \mathrm{with \ probability} \quad \eta_0, \\
0, & \qquad \mathrm{otherwise}.
\end{array}
\right.
\end{align}
In other words, the oracle~\eqref{eq:def_oracle} reveals the correct cluster assignment of node~$i$ with probability~$\eta_1$ and gives a false cluster assignment with probability~$\eta_0$. It reveals nothing with probability~$1-\eta_1-\eta_0$.
The quantity $\pr\left( s_i \not = Z_i \cond s_i \not= 0 \right)$ is the rate of mistakes of the oracle (\textit{i.e.,} the probability that the oracle reveals false information given that it reveals something), and is equal to $\eta_0/(\eta_1+\eta_0)$. The oracle is informative if this quantity is less than~$1/2$, which is equivalent to $\eta_1 > \eta_0$. In the following, we will always assume that the oracle is informative.

\begin{assumption}
\label{assumption:informative_oracle}
 The oracle is informative, that is, $\eta_1 > \eta_0$.
\end{assumption}

Given the observation of $A$ and $s$, the goal of clustering is to recover the community labeling vector $Z$. For an estimator $\hZ \in \{-1,1\}^n$ of $Z$, the relative error is defined as the proportion of misclassified nodes
\begin{align}
\label{eq:def_error}
 L \left( \hZ, Z \right) \weq \frac{1}{n}  \sum_{i=1}^n 1\left( \hZ_i \not= Z_i \right).
\end{align}
Note that, unlike unsupervised clustering, we do not take a minimum over the permutations of the predicted labels since we should be able to learn the correct community labels from the informative oracle.

\textit{Notations}
Given an oracle $s$, we let $\ell$ be the set of labeled nodes, that is $\ell := \{ i \in V : s_i \not= 0 \}$, and denote $\cP$ the diagonal matrix with entries $\left( \cP \right)_{ii} = 1$, if $i \in \ell$, and $\left(\cP \right)_{ii} = 0$, otherwise.


The notation $I_n$ stands for the identity matrix of size $n \times n$, and $1_n$ (resp.,~$0_n$) is the vector of size $n \times 1$ of all ones (resp., of all zeros).

For any matrix $A = \left(a_{ij}\right)_{i \in [n], j \in [m]}$ and two sets $S \subset [n]$, $T \subset [m]$, we denote $A_{S,T} = \left(a_{ij}\right)_{i \in S, j \in T}$ the matrix obtained from $A$ by keeping elements, whose row indices are in $S$ and column indices are in $T$. We denote by $\|x\|$ the Euclidean norm of a vector $x$ and by $\|A\|$ the spectral norm of a matrix 
$A \in \R^{ n\times m}$.
Finally, $A \odot B$ refers to the entry-wise matrix product between two matrices $A$ and $B$ of the same size.

\subsection{MAP estimator for semi-supervised recovery in DC-SBM}
\label{subsection:MAP_estimator_DCSBM}

Given a realization of a DC-SBM graph adjacency matrix $A$ and the oracle information $s$, the Maximum A Posteriori (MAP) estimator is defined as
\begin{align}
\label{eq:MAP_definition}
\hZ^{\MAP} & \weq \argmax_{ z \in \{-1,1\}^n } \pr( z \cond A, s).
\end{align}
This estimator is known to be optimal (in the sense that if it fails then any other estimator would also fail, see \textit{e.g.}, \cite{Iba_1999}) for the exact recovery of all the community labels.
Theorem~\ref{thm:MAP_dcsbm} provides an expression of the MAP.

\begin{theorem}\label{thm:MAP_dcsbm} 
 Let $G$ be a graph drawn from DC-SBM as defined in~\eqref{def:DCSBM} and $s$ be the oracle information as defined in~\eqref{eq:def_oracle}.
 Denote 
 $ 
 M \weq  (F_1-F_0) \odot A + F_0,
 $
 where $F_0 = \left(f^{(0)}_{ij} \right)$ and $F_1 = \left(f^{(1)}_{ij} \right)$ such that 
 $f^{(a)}_{ij} = \log \frac{\pr(A_{ij} = a \cond z_i = z_j )}{ \pr(A_{ij} = a \cond z_i \not= z_j ) } $ for $a \in \{0,1\}$.
 The MAP estimator defined in~\eqref{eq:MAP_definition} is given~by
 \begin{equation}
 \label{eq:MAP_dcsbm_SSL_noise}
 \hZ^{\MAP} \weq \argmin_{ \substack { z \in \{-1,1\}^n } } \, 
 \left(  z^T M z + 
 \log \left( \frac{\eta_1}{\eta_0} \right) 
 \left\|  \cP z - s \right\|^2 \right). 
 \end{equation}
 \noindent For a perfect oracle $(\eta_0 = 0)$ this reduces to
 \begin{equation}
 \label{eq:MAP_dcsbm_SSL_no_noise}
 \hZ^{\MAP} \weq \argmin_{\substack{ z \in \{-1,1\}^n \\ z_{\ell} = s_{\ell} } }  z^T M z.
 \end{equation}
\end{theorem}

The proof of Theorem~\ref{thm:MAP_dcsbm} is standard and postponed to Appendix~\ref{appendix_section_derivation_MAP}. We note that, despite being \textit{a priori} standard, this result did not appear previously in the literature (neither for the standard SBM nor for the perfect oracle).

The minimization problem~\eqref{eq:MAP_dcsbm_SSL_noise} consists of a trade-off between minimizing a quadratic function $z^T M z$ and a penalty term. 
This trade-off reads as follows: for each labeled node such that the prediction contradicts the oracle, a penalty $\log \left( \frac{\eta_1}{\eta_0} \right) > 0$ is added.
In particular, when the oracle is uninformative, that is $\eta_1 = \eta_0$, then this term is null, and Expression~\eqref{eq:MAP_dcsbm_SSL_noise} reduces to the MAP for unsupervised clustering.

The following Corollary~\ref{corollary:MAP_SBM}, whose proof is in Appendix~\ref{appendix_section_derivation_MAP}, provides the expression of the MAP estimator for a standard SBM.

\begin{corollary}
\label{corollary:MAP_SBM}
 The MAP estimator for semi-supervised clustering on SBM graph with $\pin > \pout$ and with an oracle $s$ defined in~\eqref{eq:def_oracle} is given by
 \begin{align}
 \label{eq:MAP_SSL_noise}
 \hZ^{\MAP} \weq \argmin_{ z \in \{-1,1\}^n } \, \left( - z^T \left(A - \tau 1_n 1_n^T \right) z + \lambda^*  \left\| \cP z  - s \right\|_2^2 \right),
 \end{align}
 where 
 $
 \tau = \dfrac{\log \Big( \dfrac{1-\pout}{1-\pin} \Big) }{ \log \Big( \dfrac{\pin (1-\pout)}{\pout (1-\pin)} \Big) }$ and $\lambda^* = \dfrac{ \log \Big( \dfrac{\eta_1}{\eta_0} \Big)}{\log \bigg( \dfrac{\pin (1-\pout)}{\pout(1-\pin)} \bigg)}. 
 $
 For the perfect oracle, this reduces to
 \begin{equation}\label{eq:MAP_SSL_no_noise}
 \hZ^{\MAP} \weq \argmin_{\substack{ z \in \{-1,1\}^n \\ z_{\ell} = s_{\ell} } } \, z^T \left( -A + \tau 1_n 1_n^T \right) z.
 \end{equation}
\end{corollary}

\section{Almost exact recovery using a continuous relaxation}\label{section:continuous_relaxation}

As finding the MAP estimate is NP-hard \cite{wagner1993between}, we perform a continuous relaxation (Section~\ref{subsection:continuous_relaxation_MAP}).
We then give an upper bound on the number of misclassified nodes in Section~\ref{subsection:ratio_misclassified_nodes}.

\subsection{Continuous relaxation of the MAP}
\label{subsection:continuous_relaxation_MAP}

For the sake of presentation simplicity, we focus on the MAP for SBM, \textit{i.e.,} minimization problem~\eqref{eq:MAP_SSL_noise}.
We perform a continuous relaxation mirroring what is commonly done for spectral methods~\cite{Newman_2013_spectral}, namely
\begin{align}
\label{eq:MAP_relaxed}
\hX \weq \argmin_{ \substack{ x \in \R^{n} \\ \sum_i \kappa_i x_i^2 = \sum_i \kappa_i } } \left( -x^T A_\tau x + \lambda ( s - \cP x )^T ( s - \cP x ) \right),
\end{align}
where $A_{\tau} = A - \tau 1_n 1_n^T$ and $\kappa = (\kappa_1, \dots, \kappa_n)$ is a vector of positive entries. 
We choose to constrain $x$ on the hyper-sphere $\|x\|^2 = n$ by letting $\kappa_i = 1$, but other choices would lead to a similar analysis. In particular, in the numerical Section~\ref{section:numerical_results} we will compare this choice with a degree-normalization approach (\textit{i.e.,} $\kappa_i = d_i$).

We further note that for the perfect oracle, the corresponding relaxation of~\eqref{eq:MAP_SSL_no_noise} is
\begin{equation}
\label{eq:MAP_relaxed_no_noise}
\hX = \argmin_{\substack{ x \in \R^n \\ x_{\ell} = s_{\ell} \\ \|x\|^2 = n } } \left(  - x^T A_\tau  x \right).
\end{equation}
Given the classification vector $\hX \in \R^{n}$, node $i$ is classified into cluster $\hZ_i \in \{-1, 1\}$ such that
\begin{equation}\label{detection_criteria}
\hZ_i \weq \left\{
	\begin{array}{ll}
	1, & \qquad \mathrm{if}\quad \hX_i > 0, \\
	-1, & \qquad \mathrm{otherwise}. \\
	\end{array}
	\right.
\end{equation}
Let us solve the minimization problem~\eqref{eq:MAP_relaxed}. By letting $\gamma \in \R$ be the Lagrange multiplier associated with the constraint $\|x\|^2 = n$, the Lagrangian of the optimization problem~\eqref{eq:MAP_relaxed}~is
\[
-x^T A_\tau x + \lambda ( s - \cP x )^T ( s - \cP x )    - \gamma \left(  x^T x  - n \right).
\]
This leads to the {\it constrained} linear system
\begin{align}
\label{eq:constraint_linear_system}
 \left\{
 \begin{array}{rc}
 \left( - A_\tau + \lambda \cP - \gamma I_n \right) x \weq & \lambda s, \\
  x^T x \weq & n,
 \end{array}
 \right.
\end{align}
whose unknowns are $\gamma$ and $x$.

While~\cite{Mai_Couillet_2020} let $\gamma$ to be a hyper-parameter (hence the norm constraint $x^T x = n$ is no longer verified), the exact optimal value of $\gamma$ can be found explicitly following~\cite{Gander_Golub_Matt_1989}.
Firstly, we note that if $(\gamma_1,x_1)$ and $(\gamma_2,x_2)$ are solutions of the system~\eqref{eq:constraint_linear_system}, then (see Lemma~\ref{lemma:diff_cost} for the derivations) 
\begin{align*}
    \cC (x_1) - \cC (x_2) \weq \frac{ \gamma_1 - \gamma_2 }{2} \, \left\| x_1 - x_2 \right\|^2,
\end{align*}
where $\cC(x) = - x^T A_\tau x + \lambda (s-\cP x)^T (s-\cP x)$ is the cost function minimized in~\eqref{eq:MAP_relaxed}.
Hence, among the solution pairs $(\gamma, x)$ of the system~\eqref{eq:constraint_linear_system}, the solution of the minimization problem~\eqref{eq:MAP_relaxed} is the vector~$x$ associated with the smallest~$\gamma$.

Secondly, the eigenvalue decomposition of $-A_{\tau} + \lambda \cP$ reads as
\begin{align*}
 -A_{\tau} + \lambda \cP \weq Q \Delta Q^T,
\end{align*}
where $\Delta = \diag ( \delta_1, \dots, \delta_n)$ with $\delta_1 \leq \dots \leq \delta_n$ and $Q^T Q = I_n$. Therefore, after the change of variables $u = Q^T x$ and $b = \lambda Q^T s$, the system~\eqref{eq:constraint_linear_system} is transformed to
\begin{align*}
 \left\{
 \begin{array}{rl}
 \Delta u & \weq \gamma u + b, \\
 u^T u  & \weq n.
 \end{array}
 \right.
\end{align*}
Thus, the solution $\hX$ of the optimization problem~\eqref{eq:MAP_relaxed} satisfies
\begin{align}
 \label{eq:SSL_solution_minimization_noisy}
 \left( - A_\tau + \lambda \cP - \gamma_* I_n \right) \hX & \weq \lambda s,
\end{align}
where $\gamma_*$ is the smallest solution of the \textit{explicit secular equation}~\cite{Gander_Golub_Matt_1989}
\begin{align}
\label{eq:explicit_secular_equation}
    \sum_{i=1}^n \left(  \frac{ b_i }{  \delta_i - \gamma  }  \right)^2 - n \weq 0.
\end{align}

We summarize this in Algorithm~\ref{algo:SSL-SC-regularized_adjacency_matrix}. 
Note that for the sake of generality, we let $\lambda$ and $\tau$ be hyper-parameters of the algorithm.
If the model parameters are known, we can use the expressions of $\lambda$ and $\tau$ derived in Corollary~\ref{corollary:MAP_SBM}. The choice of $\lambda$ and $\tau$ is further discussed in Section~\ref{section:numerical_results}.
We must use power iterations or Krylov subspace methods to apply Algorithm~\ref{algo:SSL-SC-regularized_adjacency_matrix} to large data sets. The main computational bottleneck in those methods will be the matrix-vector product $A_\tau v$. The matrix $A_\tau$ is not sparse. Since $A_\tau$ is a sum of a sparse matrix and a rank-one matrix, the computation of $A_\tau v = Av - \tau (1_n^T v) 1_n$ can be done efficiently by subtracting the same scalar $\tau (1_n^T v)$ from all the entries of the result of the sparse matrix-vector multiplication.

\begin{algorithm}
\caption{Semi-supervised learning with regularized adjacency matrix.}
\label{algo:SSL-SC-regularized_adjacency_matrix}
\begin{algorithmic}
\STATE{\textbf{Input:} Adjacency matrix $A$, oracle information $s$, parameters $\tau$ and $\lambda$.}
\STATE{\textbf{Procedure:}}
\STATE{Let $\gamma^*$ be the smallest solution of Equation~\eqref{eq:explicit_secular_equation}. \\
 Compute $\hX$ as the solution of Equation~\eqref{eq:SSL_solution_minimization_noisy}.}
\FOR{$ i = 1 \dots n$}
\STATE{$\hZ_i = \mathrm{sign}\left( \hX_i \right)$.}
\ENDFOR
\RETURN $\hZ$.
\end{algorithmic}
\end{algorithm}

\subsection{Ratio of misclassified nodes}
\label{subsection:ratio_misclassified_nodes}

 This section gives bounds on the number of unlabeled nodes misclassified by Algorithm~\ref{algo:SSL-SC-regularized_adjacency_matrix}. We then specialize the results for some particular cases.
 
 \begin{theorem}
 \label{thm:bound_number_misclassified_nodes}
 Consider a DC-SBM with a noisy oracle as defined in \eqref{def:DCSBM},\eqref{eq:def_oracle}. Let $\bd = \frac{n}{2}(\pin + \pout)$ and $\balpha = \frac{n}{2}(\pin - \pout)$.
 Suppose that $\tau > \pout$ and that $\eta_0 n \sqrt{\eta_1+\eta_0} \ll \lambda$, and let $\hZ$ be the output of Algorithm~\ref{algo:SSL-SC-regularized_adjacency_matrix}.
 Then, for any $r > 0$, there exists a constant $C$ such that the proportion of misclassified unlabeled nodes satisfies
 \begin{equation*}
 L \left( \hZ_u , Z_u \right)
 \wle C 
 \left( \frac{\pin + \pout }{ \pin - \pout } \right)^2
 \Bigg(\dfrac{ \bar{ \alpha } + \lambda }{  \lambda } \Bigg)^2  \frac{ 1 }{ (\eta_1 + \eta_0) \left( \eta_1 - \eta_0 \right)^2 \bar{d} },
 \end{equation*}
 with probability at least $1 - n^{-r}$.
\end{theorem}

 The value of $\lambda$ in Theorem~\ref{thm:bound_number_misclassified_nodes} serves as a hyper-parameter of the algorithm and may not necessarily be equal to the value $\lambda^*$ computed in Corollary~\ref{corollary:MAP_SBM}. Consequently, one can opt for a $\lambda$ significantly larger than $\eta_0 n \sqrt{\eta_0+\eta_1}$, even if the $\lambda^*$ from Corollary~\ref{corollary:MAP_SBM} is not much larger than $\eta_0 n \sqrt{\eta_0+\eta_1}$. Selecting $\lambda > \lambda^*$ indicates an excessive reliance on the information provided by the oracle, but it has a benign effect on the error bound of the unlabeled nodes given in Theorem~\ref{thm:bound_number_misclassified_nodes}. 

The core of the proof relies on the concentration of the adjacency matrix towards its expectation. This result, as presented in \cite{levina2018concentration}, holds under loose assumptions:
it is valid for any random graph whose edges are independent of each other. To use this result for $\bar{d} = o\big(\log n \big)$, one needs to replace the matrix $A_\tau$ by $A_\tau' = A' - \tau 1_n 1_n^T$, where $A'$ is the adjacency matrix of the graph obtained after reducing the weights on the edges incident to the high degree vertices. We refer to~\cite[Section~1.4]{levina2018concentration} for more details. This extra technical step is not necessary when $\bar{d} = \Omega(\log n)$. 
Moreover, concentration also occurs if we replace the adjacency matrix with the normalized Laplacian in Equation~\eqref{eq:SSL_solution_minimization_noisy}. In that case, we obtain a generalization of the Label Spreading algorithm \cite{zhou2004learning}, \cite[Chapter~11]{chapelle2006semi}.

In the following, the mean-field graph refers to the weighted graph formed by the expected adjacency matrix of a DC-SBM graph.
Furthermore, we assume without loss of generality that the first~$n/2$ nodes are in the first cluster and the last~$n/2$ are in the second cluster. Therefore,
$
 \E A
 = Z B Z^T
$
with 
$
B = \begin{pmatrix}
\pin & \pout \\ \pout & \pin
\end{pmatrix}
$
and
$
Z =  \begin{pmatrix}
1_{n/2} & 0_{n/2} \\
0_{n/2} & 1_{n/2}
\end{pmatrix} .
$
In particular, the coefficients~$\theta_i$ disappear because $\E \theta_i = 1$.
We consider the setting in which the diagonal elements of $\E A$ are not zeros. This accounts for modifying the definition of DC-SBM, where we can have self-loops with probability $\pin$. Nevertheless, we could set the diagonal elements of $\E A$ to zeros and our results would still hold at the expense of cumbersome expressions. 
Note that the matrix $\E A$ has two non-zero eigenvalues: $\bar{d} = n \frac{\pin + \pout}{2} $ and $\bar{ \alpha } = n \frac{\pin - \pout}{2} $.

\begin{proof}[Proof of Theorem~\ref{thm:bound_number_misclassified_nodes}]
 We prove the statement in three steps. We first show that the solution $\hX$ of the constrained linear system~\eqref{eq:constraint_linear_system} is concentrated around the solution $\bx$ of the same system for the mean-field model. 
 Then, we compute $\bx$ and show that we can retrieve the correct cluster assignment from it. We finally conclude with the derivation of the bound.
 
  (i) Similarly to \cite{avrachenkov2018mean} and  \cite{avrachenkov2019almost}, let us rewrite equation~\eqref{eq:SSL_solution_minimization_noisy} as a perturbation of a system of linear equations corresponding to the mean-field solution. We thus have
 \begin{equation*}
  \big( \E  \tilde{\cL} + \Delta \tilde{\cL} \big) \big( \bar{x} + \Delta x \big) =  \lambda s,
 \end{equation*}
 where $\tilde{\cL} = - A_\tau + \lambda \cP - \gamma_* I_n$, $\Delta x := \hX - \bx$ and $\Delta \tilde{\cL} := \tilde{\cL} - \E \tilde{\cL}$.

 We recall that a perturbation of a system of linear equations $ (A + \Delta A) (x + \Delta x) = b $ leads to the following sensitivity inequality (see e.g., \cite[Section 5.8]{horn_johnson_2012}): 
 \begin{equation*}
  \dfrac{\|\Delta x\|}{\|x\|} 
  \wle \kappa(A) \dfrac{ \| \Delta A \| }{ \| A \| },
 \end{equation*}
 where $\|.\|$ is the operator norm associated with a vector norm $\|.\|$ (we use the same notations for simplicity) and $\kappa(A) := \|A^{-1}\| \cdot \|A\|$ is the condition number.
 In our case, the above inequality can be rewritten as follows:
 \begin{equation}
 \label{eq:in_proof_sensitivity}
  \dfrac{ \left\| \hX- \bar{x} \right\| }{ \left\| \bar{x} \right\| } 
  \wle  \left\| \left( \E \: \tilde{\cL} \right)^{-1} \right\|  \cdot  \left\| \Delta \: \tilde{\cL} \right\|,
 \end{equation}
 employing the Euclidean vector norm and spectral operator norm.
 The spectral study of $\E \:  \tilde{\cL}$ (see Corollary~\ref{cor:spectrum_Ltilde} in Appendix~\ref{appendix:rank_2}) gives:
 \begin{align*}
  \left\| \left( \E \: \tilde{\cL}\right)^{-1} \right\| & \weq \dfrac{1}{ \min \big \{ |\lambda| : \lambda \in \mathrm{Sp} \big( \E \: \tilde{\cL} \big) \big \} } = \dfrac{1}{ - t_2^+ -\bar{ \gamma}_* },
 \end{align*}
 where $t_2^+$ is defined in Corollary~\ref{cor:spectrum_Ltilde} in Appendix~\ref{appendix:rank_2} and $\bar{\gamma}_*$ is the solution of Equation~\eqref{eq:explicit_secular_equation} for the mean-field model. Lemma~\ref{lemma:bounding_gamma_star_mf} in Appendix~\ref{appendix:estimation_gamma_starMF} leads to 
 \begin{align}
 \label{eq:norm_EL_inverted}
  \left\| \left( \E \: \tilde{\cL} \right)^{-1} \right\| & \wle \frac{1}{ \lambda + \balpha }.
 \end{align}

 The last ingredient needed is the concentration of the adjacency matrix around its expectation. We have
 \begin{align*}
  \Big\|\tilde{\cL} - \E \tilde{\cL} \Big\| \wle \left\| \left( \gamma_* - \bar{ \gamma }_* \right) I_n \right\| + \|  A - \E \: A \| 
  \wle \left| \: \gamma_* - \bar{ \gamma }_* \: \right| + \|  A - \E \: A \|.
 \end{align*}
 Proposition~\ref{prop:concentration_gamma_*} in Appendix~\ref{appendix:concentration_gammastar} shows that
 \begin{align*}
  \left| \: \gamma_* - \bgamma_* \: \right| 
  & \wle 
  \left( 1 + \frac{\left( \bar{\alpha} + \lambda \right)^3 }{ 2 \sqrt{\eta_1+\eta_0} ( \eta_1 - \eta_0 )  \bar{\alpha}^2 \lambda  } \right) \sqrt{ \bar{d} }.
 \end{align*}
 Moreover, when $d = \Omega(\log n)$, it is shown in \cite{feige2005spectral} that for every $r>0$ there exists a constant $C'$ such that $\left\|  A - \E \: A \right\| \wle C' \sqrt{\bar{d}}$ holds with probability at least $1-n^{-r}$. 
 If $\bar{d} = o(\log n)$, the same result holds with a proper pre-processing on $A$, and 
 we refer the reader to \cite{levina2018concentration} for more details. We will omit this extra step in the proof to keep notations short. Using this concentration bound, we have
 \begin{align*}
  \Big\|\tilde{\cL} - \E \tilde{\cL} \Big\| & \wle 
  \left( C' + \frac{27 \left( \bar{\alpha} + \lambda \right)^3 }{ \sqrt{2} \sqrt{\eta_1+\eta_0} ( \eta_1 - \eta_0 )  \bar{\alpha}^2 \lambda  } \right) \sqrt{ \bar{d} } \\
  & \wle 
  \left( C' + \frac{27}{\sqrt{2}} \right)
  \frac{(\lambda+\bar{\alpha})^3 }{ \bar{\alpha}^2 \lambda }
  \frac{\sqrt{\bar{d} } }{ \sqrt{\eta_1+\eta_0} \left( \eta_1-\eta_0 \right)}.
 \end{align*}
 for some constant $C'$.
 Let $C = C' + \frac{27}{\sqrt{2}}$.
 By combining the above with inequality~\eqref{eq:norm_EL_inverted}, the inequality~\eqref{eq:in_proof_sensitivity} becomes
 \begin{equation}
 \label{eq:concentration_solution_around_mean_field}
  \dfrac{ \left\| \hX - \bx \right\| }{ \left\| \bx \right\| } 
  \wle C \, \frac{(\lambda+\bar{\alpha})^2 }{ \bar{\alpha}^2 \lambda }
  \frac{\sqrt{\bar{d} } }{ \sqrt{\eta_1+\eta_0} \left( \eta_1-\eta_0 \right)}.
 \end{equation}

 (ii) Node $i$ in the mean-field model is correctly classified by decision rule~\eqref{detection_criteria} if the sign of~$\bx_i$ equals the sign of~$Z_i$.
 Corollary~\ref{cor:recovery_with_the_mean_field} in Appendix~\ref{appendix:solution_mean_field} shows that this is indeed the case for the unlabeled nodes.
 
 (iii) Finally, for an unlabeled node $i$ to be correctly classified, the node's value~$\hX_{i}$ should be close enough to its mean-field value $\bx_{i}$.
 In particular, part (ii) shows that if $|\hX_{i} - \bx_{i}|$ is smaller than some non-vanishing constant $\beta$, then an unlabeled node $i$ will be correctly classified. An unlabeled node $i$ is said to be $\beta$-bad if $\left| \hX_i - \bx_i \right| > \beta$. We denote by $S_{\beta}$ the set of $\beta$-bad nodes.
The nodes that are not $\beta$-bad are a.s.~correctly classified, and thus $ L \left( \hZ_u, Z_u \right) \le \frac{ | S_\beta | }{ n } $.
		
 From 
 $\left\| \hX - \bx \right\|^2 
 \geq \sum\limits_{ i \in S_{\beta} } \left| \hX_i - \bx_i \right|^2$, it follows that $\left\| \hX - \bx \right\|^2 \geq | S_{\beta} | \times \beta^2$.
 Thus, using inequality~\eqref{eq:concentration_solution_around_mean_field} and the norm constraint $\left\| \bx \right\|^2 = n$, we have with probability at least $1-n^{-r}$, 
 \begin{align*}
 \left| S_{\beta} \right| & \wle \dfrac{1}{\beta^2} \left( \frac{C}{\eta_1 - \eta_0} \frac{ \balpha + \lambda }{ \bar{\alpha} \lambda} \sqrt{ \bd } \right)^2 n,
 \end{align*}
 for some constant $C$. We end the proof by noticing that $\frac{ \bd }{ \balpha } = \frac{\pin + \pout}{\pin - \pout}$.
 \end{proof}

 \begin{corollary}
 [Almost exact recovery in the diverging degree regime]
 \label{cor:almost_exact_recovery}
 Consider a DC-SBM such that $\bd \gg 1$, $\frac{\pin + \pout}{\pin - \pout} = O(1)$, $\sqrt{\eta_0+\eta_1}(\eta_1 - \eta_0)  \gg \frac{1}{\sqrt{ \bar{d} }}$, and $\eta_0 n \sqrt{\eta_0+\eta_1} \ll \lambda$.
 Suppose that $\tau > \pout$ and $\lambda \gtrsim \bar{ \alpha }$.
 Then, Algorithm~\ref{algo:SSL-SC-regularized_adjacency_matrix} correctly classifies almost all the unlabeled nodes.
 \end{corollary}
	
 \begin{proof}
 With the corollary's assumptions 
 $(\eta_1-\eta_0)^2 \bar{d} \rightarrow +\infty$ and $\frac{ \bar{ \alpha } + \lambda}{\lambda} = O(1)$, by Theorem~\ref{thm:bound_number_misclassified_nodes} the fraction of misclassified nodes is of the order $o(1)$.
 \end{proof}

 The quantity $(\eta_1 - \eta_0)n$ is the expected difference between the number of nodes correctly labeled and the number of nodes wrongly labeled by the oracle. 
 In particular, Corollary~\ref{cor:almost_exact_recovery} allows for a sub-linear number of labeled nodes since $\eta_0$ and $\eta_1$ can go to zero.
 

 \begin{corollary}[Detection in the constant degree regime]\label{cor:detection} Consider a DC-SBM such that $\pin = \frac{\cin}{n}$ and $\pout = \frac{\cout}{n}$ where $\cin, \cout$ are constants. Suppose that $\sqrt{\eta_0+\eta_1}(\eta_1 - \eta_0)$ is a non-zero constant, and let $\tau > 2\pout$ and $\lambda \gesim 1$. Assume furthermore that $\eta_0 n \sqrt{\eta_0+\eta_1} \ll \lambda$. Then, for $\frac{(\cin - \cout)^2}{\cin + \cout}$ bigger than some constant, w.h.p. Algorithm~\ref{algo:SSL-SC-regularized_adjacency_matrix} performs better than a random guess.
 \end{corollary}
	
 \begin{proof} 
  According to Theorem~\ref{thm:bound_number_misclassified_nodes}, the fraction of misclassified nodes is smaller than $\frac{1}{2}$ when $\frac{(\cin - \cout)^2}{\cin + \cout}$ is larger than 
  $\frac{4C}{(\eta_1 - \eta_0)^2} \left( \frac{ \bar{ \alpha } + \lambda}{\lambda} \right)^2$, which is indeed lower-bounded by a constant.
 \end{proof}
	
 The quantity $\frac{(\cin - \cout)^2}{\cin + \cout}$ can be interpreted as the signal-to-noise ratio. It is unfortunate that Corollary~\ref{cor:detection} does not allow us to control the constant in the statement of the corollary. This constant comes from the concentration of the adjacency matrix. Similar remarks were made in~\cite{levina2018concentration} for the analysis of spectral clustering in the constant degree regime for SBMs graph.

\section{Numerical experiments}
\label{section:numerical_results}

 This section presents numerical experiments both on simulated data sets generated from DC-SBMs and on real networks. In particular, we discuss the impact of the oracle mistakes (defined by the ratio $\frac{\eta_0}{\eta_0+\eta_1}$) on the performance of the algorithms.
 The code for the numerical experiments is available on GitHub at \url{https://github.com/mdreveton/ssl-sbm}.
 
\subsection{Synthetic data sets}

\paragraph{Choice of $\lambda$ and $\tau$}
Let us denote by $\sigma_1$ and $\sigma_2$ the largest and second largest eigenvalues of~$A$. We choose $\tau = \frac{4}{n} (\sigma_1+\sigma_2)$ and $\lambda =  \frac{ \log \frac{\eta_1}{\eta_0} }{ \log \frac{\sigma_1 + \sigma_2}{\sigma_1-\sigma_2} }$, if $\eta_0 \not=0$, and $\lambda = \frac{ \log \left( n \eta_1 \right) }{ \log \frac{\sigma_1 + \sigma_2}{\sigma_1-\sigma_2} }$, otherwise.
The heuristic for this choice is as follows. For a SBM graph, we have $\sigma_1 \approx \frac{n}{2}\left(\pin + \pout \right)$ and $\sigma_2 \approx \frac{n}{2} ( \pin - \pout)$, hence $ \frac{4}{n} (\sigma_1+\sigma_2) =  2 \pin > \pout $, and $\tau$ satisfies the condition of Theorem~\ref{thm:bound_number_misclassified_nodes}.
For $\lambda$, we have $ \frac{ \log \frac{\eta_1}{\eta_0} }{ \log \frac{\sigma_1 + \sigma_2}{\sigma_1-\sigma_2} } \approx \frac{ \log \frac{\eta_1}{\eta_0} }{ \log \frac{\pin}{\pout} }$, which is indeed close to the expression of $\lambda$ derived in Corollary~\ref{corollary:MAP_SBM} if $\pin, \pout = o(1)$.

\paragraph{Choice of relaxation}

We first compare the choice of the constraint in the continuous relaxation~\eqref{eq:MAP_relaxed}. Specifically, we compare the choice $\sum_i x_i^2 = n$ (we refer to it as \textit{standard relaxation}) versus $\sum_i d_i x_i^2 = 2|E|$ (we refer to it as \textit{degree-normalized relaxation}). This leads to two versions of Algorithm~\ref{algo:SSL-SC-regularized_adjacency_matrix}, whose cost obtained on SBMs graph with a noisy oracle is presented in Figure~\ref{fig:relaxatio_choice}. In particular, we observe that the normalized choice leads to a smaller cost. 
Therefore, in the following we will only consider the version of Algorithm~\ref{algo:SSL-SC-regularized_adjacency_matrix} solving the relaxed problem~\eqref{eq:MAP_relaxed} with constraint $\sum_{i} d_i x_i^2 = 2 |E|$ instead of $\sum_i x_i^2 = n$, as it gives better numerical results.

\begin{figure}[!ht]
\centering
\begin{subfigure}[b]{0.32\textwidth}
 \centering
 \includegraphics[width=\textwidth]{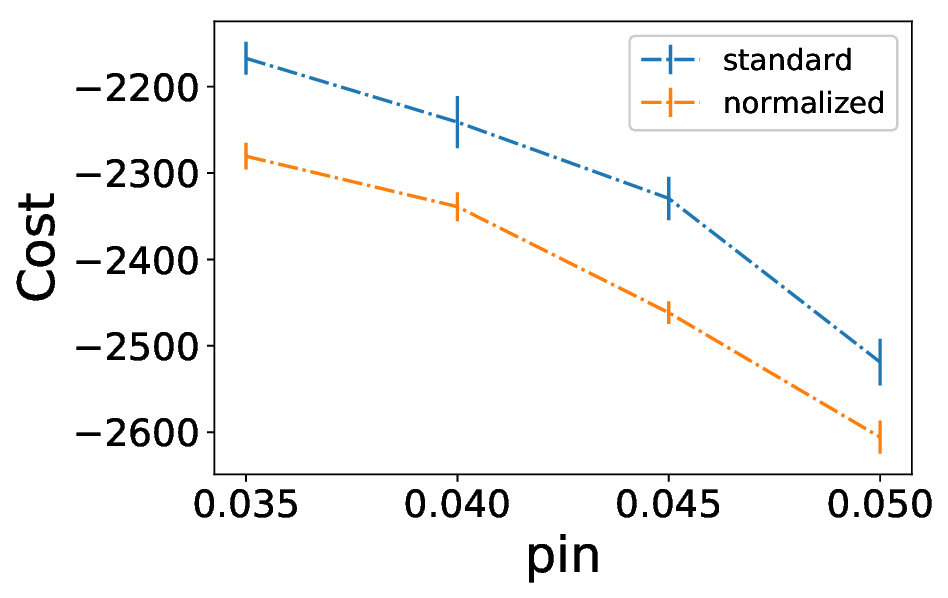}
\end{subfigure}
\caption{Cost in Algorithm~\ref{algo:SSL-SC-regularized_adjacency_matrix} with the standard and normalized versions of the constraint, on $50$ realizations of SBM with $n = 500$, $\pout = 0.03$ and $50$ labeled nodes with $10\%$ noise.}
\label{fig:relaxatio_choice}
\end{figure}

\paragraph{Experiments on synthetic graphs}

We first consider clustering on DC-SBM. We set $n = 2000$, $\pin = 0.04$ and $\pout = 0.02$. We consider three scenarios.
\begin{itemize}
 \item In Figure~\ref{Fig:accuracy_sbm} we consider a standard SBM ($\theta_i= 1$ for all $i$);
 \item In Figure~\ref{Fig:accuracy_normal} we generate $\theta_i$ according to $|\cN( 0, \sigma^2)| + 1 - \sigma \sqrt{2/\pi} $ where $| \cN(0, \sigma^2) |$ denotes the absolute value of a normal random variable with mean $0$ and variance $\sigma^2$. We take $\sigma = 0.25$. Note that this definition enforces $\E \theta_i = 1$. 
 \item In Figure~\ref{Fig:accuracy_pareto} we generate  $\theta_i$ from Pareto distribution with density function $f(x) = \frac{a m^a}{x^{a+1} } 1(x \geq m)$ with $a = 3$ and $m = 2/3$ (chosen such that $\E \theta_i = 1$).
\end{itemize}
We compare the performance of Algorithm~\ref{algo:SSL-SC-regularized_adjacency_matrix} with that of the algorithm of~\cite{Mai_Couillet_2020} (referred to as \textit{Centered similarities}) and the \textit{Poisson learning} algorithm described in~\cite{Calder_Cook_Thorpe_Slepcev_2020_poisson}.
We chose these two algorithms as references since they perform very well on real data sets and are designed to avoid flat solutions.
Results are shown in Figure~\ref{fig:evolution_accuracy_oracle_noise_sbm}.
We observe that when the oracle noise is low, the performance of Algorithm~\ref{algo:SSL-SC-regularized_adjacency_matrix} is comparable to \textit{Centered similarities}. But, when the noise becomes non-negligible, the performance of \textit{Centered similarities} deteriorates, while the accuracy of Algorithm~\ref{algo:SSL-SC-regularized_adjacency_matrix} remains high. We notice that \textit{Poisson learning} gives poor results on synthetic data sets.

\begin{figure}[!ht]
	\centering
	\begin{subfigure}[b]{0.32\textwidth}
		\centering
		\includegraphics[width=\textwidth]{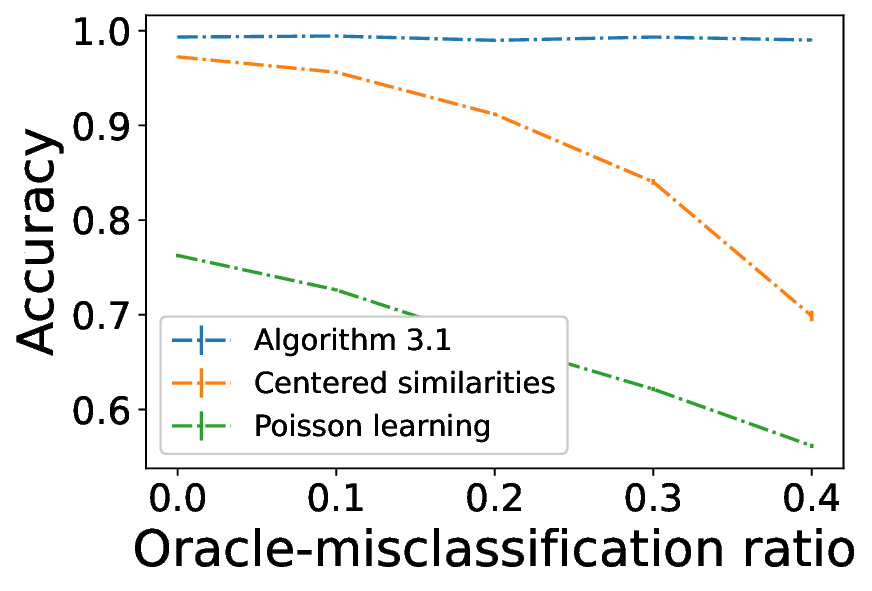}
		\caption{SBM}
		\label{Fig:accuracy_sbm}
	\end{subfigure}
	\hfill
	\begin{subfigure}[b]{0.32\textwidth}
		\centering
	    \includegraphics[width=\textwidth]{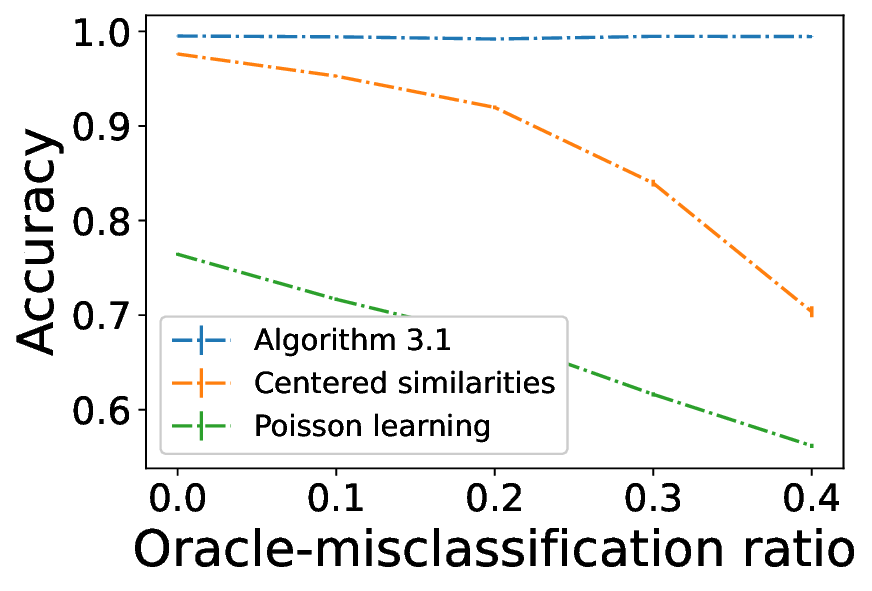}
		\caption{Normal Degree}
		\label{Fig:accuracy_normal}
	\end{subfigure}
	\hfill
	\begin{subfigure}[b]{0.32\textwidth}
		\centering
		\includegraphics[width=\textwidth]{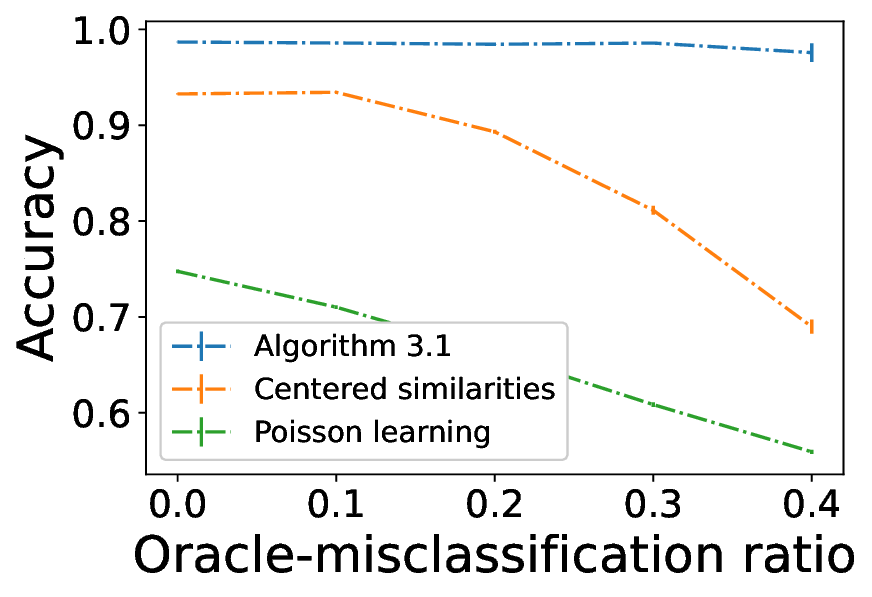}
		\caption{Pareto Degree}
		\label{Fig:accuracy_pareto}
	\end{subfigure}
	\caption{Average accuracy obtained by different semi-supervised clustering methods on DC-SBM graphs, with $n = 2000$, $\pin = 0.04$, and $\pout = 0.02$ with different distributions for $\theta$. The number of labeled nodes is equal to 40.
	Accuracies are computed on the unlabeled nodes, and are averaged over 100~realisations; the error bars show the standard error.
	}
	\label{fig:evolution_accuracy_oracle_noise_sbm}
    \end{figure}

\subsection{Experiments on real data}

We next use real data to show that even if real networks are not generated by the degree-corrected stochastic block model, Algorithm~\ref{algo:SSL-SC-regularized_adjacency_matrix} still performs well.

\paragraph{MNIST}
As a real-life example, we perform simulations on the standard MNIST data set~\cite{mnist}. As preprocessing, we select $1000$ images corresponding to two digits and compute the $k$-nearest-neighbors graph (we take $k = 8$) with Gaussian weights $w_{ij} = \exp\left( - \|x_i - x_j\|^2 / s_i^2 \right)$ where $x_i$ represents the data for image $i$ and $s_i$ is the average distance between $x_i$ and its $K$-nearest neighbors. 
Figure~\ref{fig:mnist} gives accuracy for different digit pairs. While the performance of \textit{Poisson learning} is excellent, it can suffer from the oracle noise. On the other hand, the accuracy of Algorithm~\ref{algo:SSL-SC-regularized_adjacency_matrix} remains unchanged.

\begin{figure}[!ht]
\centering
\begin{subfigure}[b]{0.32\textwidth}
 \centering
 \includegraphics[width=\textwidth]{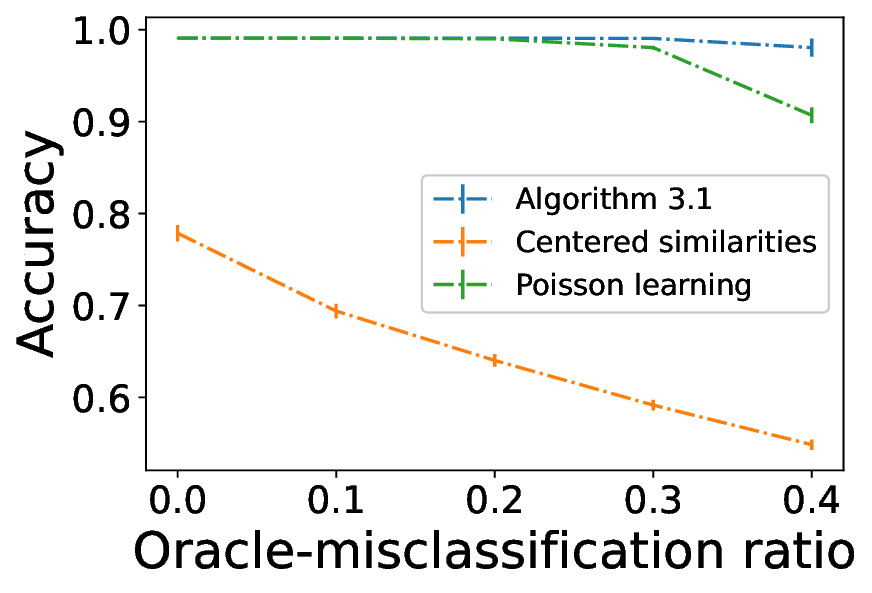}
 \caption{Digits (2,4).}
\end{subfigure}
\hfill
\begin{subfigure}[b]{0.32\textwidth}
 \centering
 \includegraphics[width=\textwidth]{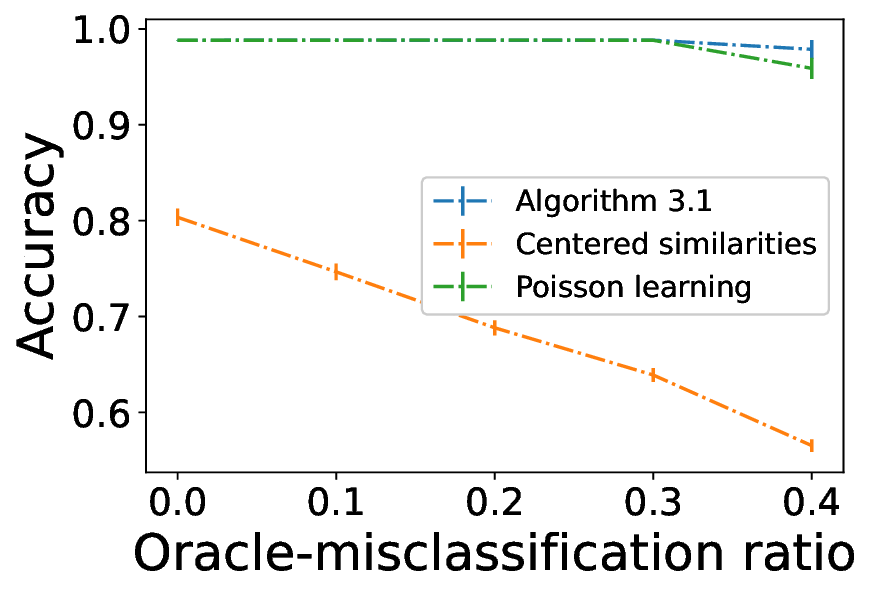}
 \caption{Digits (3,6).}
\end{subfigure}
\hfill
\begin{subfigure}[b]{0.32\textwidth}
 \centering
 \includegraphics[width=\textwidth]{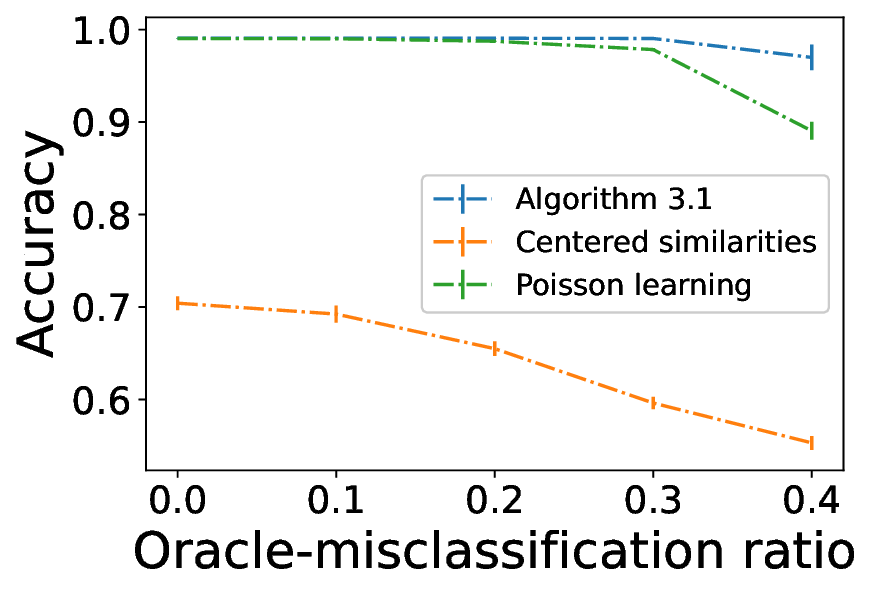}
 \caption{Digits (7,8).}
\end{subfigure}
\caption{Average accuracy obtained on a subset of the MNIST data set by different semi-supervised algorithms as a function of the oracle-misclassification ratio
, when the number of labeled nodes is equal to $10$. Accuracy is averaged over $100$ random realizations, and the error bars show the standard error.
}
\label{fig:mnist}
\end{figure}

To further highlight the influence of the noise, we plot in  Figure~\ref{fig:mnist_boxplots} the accuracy obtained by the three algorithms on the unlabeled nodes, the correctly labeled nodes, and the wrongly labeled nodes. 
We observe that the hard constraint $X_\ell = s_\ell$ imposed by \textit{Centered similarities} forces the correctly labeled nodes to be correctly classified. In contrast, the wrongly labeled nodes are not classified much better than a random guess. This heavily penalizes the unlabeled nodes' accuracy in an extremely noisy setting.
On the contrary, Algorithm~\ref{algo:SSL-SC-regularized_adjacency_matrix} allows for a smoother recovery: the unlabeled, correctly labeled, and wrongly labeled nodes have roughly the same classification accuracy. While some correctly labeled nodes are misclassified, many wrongly labeled nodes become correctly classified, and the unlabeled nodes are better recovered.
Finally, \textit{Poisson learning} shows a performance somewhere in between these two extreme cases: its accuracy on the unlabeled nodes is excellent, but it fails at correctly classifying the erroneously labeled nodes.

\begin{figure}[!ht]
\centering
\begin{subfigure}[b]{0.32\textwidth}
 \centering
 \includegraphics[width=\textwidth]{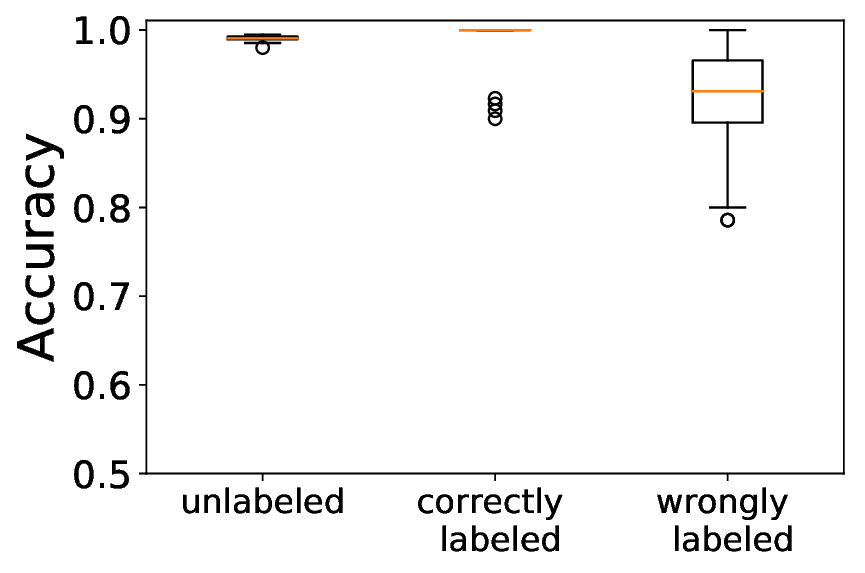}
 \caption{Algorithm~\ref{algo:SSL-SC-regularized_adjacency_matrix}.}
\end{subfigure}
\hfill
\begin{subfigure}[b]{0.32\textwidth}
 \centering
 \includegraphics[width=\textwidth]{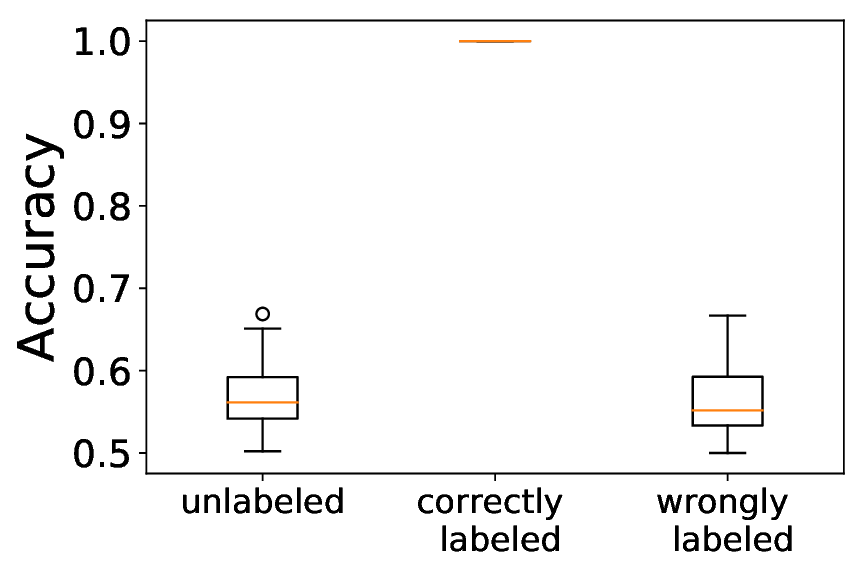}
 \caption{\textit{Centered Similarities}.}
\end{subfigure}
\begin{subfigure}[b]{0.32\textwidth}
 \centering
 \includegraphics[width=\textwidth]{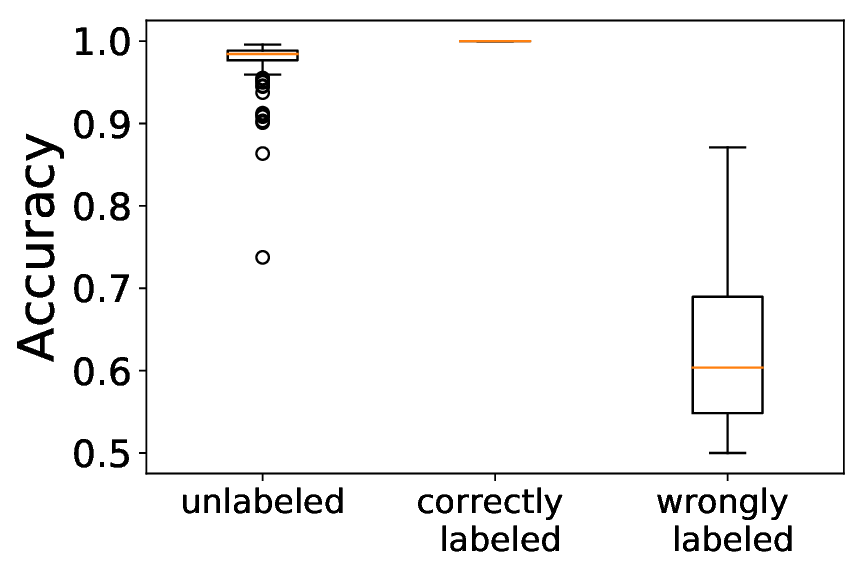}
 \caption{\textit{Poisson learning}.}
\end{subfigure}
\caption{Average accuracy obtained on the unlabeled, correctly labeled, and wrongly labeled nodes by the oracle. Simulations are done on the 1000 digits (2,4). The noisy oracle correctly classifies 24 nodes and misclassifies 16 nodes, and the boxplots show $100$ realizations.
}
\label{fig:mnist_boxplots}
\end{figure}

\paragraph{Common benchmark networks}

Finally, we perform simulations on three benchmark networks: \textit{Political Blogs}, \textit{LiveJournal}, and \textit{DBLP}. These networks are commonly used for graph clustering since the “ground truth” clusters are known. For \textit{LiveJournal} and \textit{DBLP}, we consider only the two largest clusters. The dimension of the data sets is given in Table~\ref{tab:real_networks_stat} and the performances of semi-supervised algorithms in Figure~\ref{fig:real_networks}. We observe that Algorithm~\ref{algo:SSL-SC-regularized_adjacency_matrix} and \textit{Poisson learning} outperform \textit{Centered similarities} and can still achieve good accuracy even in the presence of noise in labeled data.

\vspace{2cm}

\begin{table}[!ht]
    \centering
    \begin{tabular}{c c c c }
        Data set & $n_1$ & $n_2$ & $|E|$ \\ \hline
        \textit{Political Blogs}~\cite{adamic2005political} & 636 & 586 & 16,717 \\
        \textit{LiveJournal}~\cite{yang2015defining} & 1426 & 1340 & 24,138 \\
        \textit{DBLP}~\cite{yang2015defining} & 7373 & 5953 & 34,281 \\ \hline 
    \end{tabular}
    \caption{Parameters of the real data sets. $n_1$ (resp., $n_2$) corresponds to the size of the first (resp., second) cluster, and $|E|$ is the number of edges of the network.}
    \label{tab:real_networks_stat}
\end{table}

\begin{figure}[!ht]
\centering
\begin{subfigure}[b]{0.32\textwidth}
 \centering
 \includegraphics[width=\textwidth]{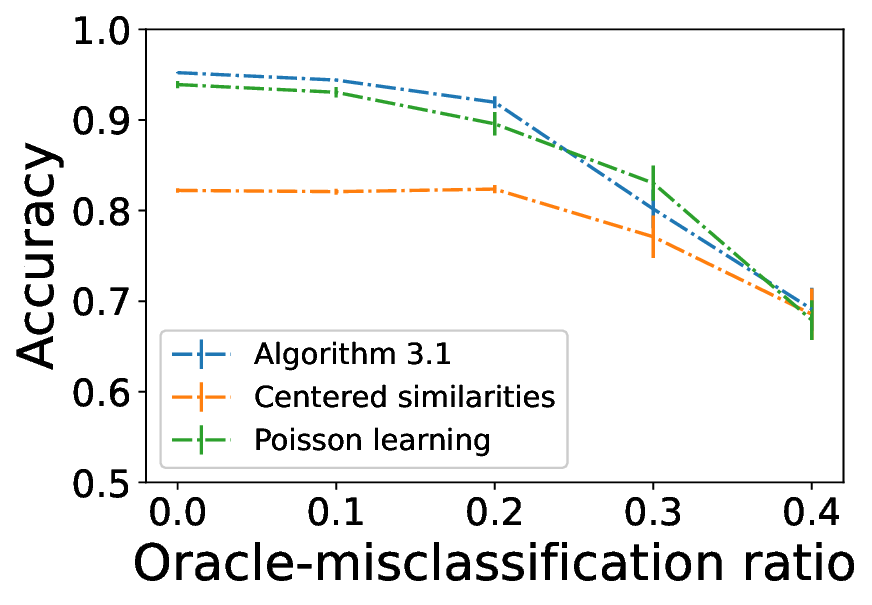}
 \caption{\textit{Political Blogs}}
\end{subfigure}
\hfill
\begin{subfigure}[b]{0.32\textwidth}
 \centering
 \includegraphics[width=\textwidth]{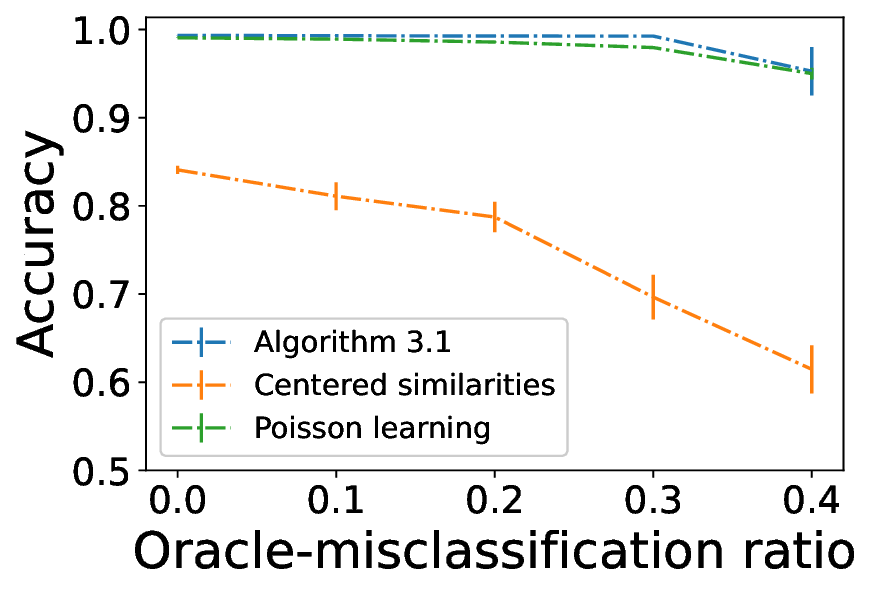}
 \caption{\textit{LiveJournal}}
\end{subfigure}
\hfill
\begin{subfigure}[b]{0.32\textwidth}
 \centering
 \includegraphics[width=\textwidth]{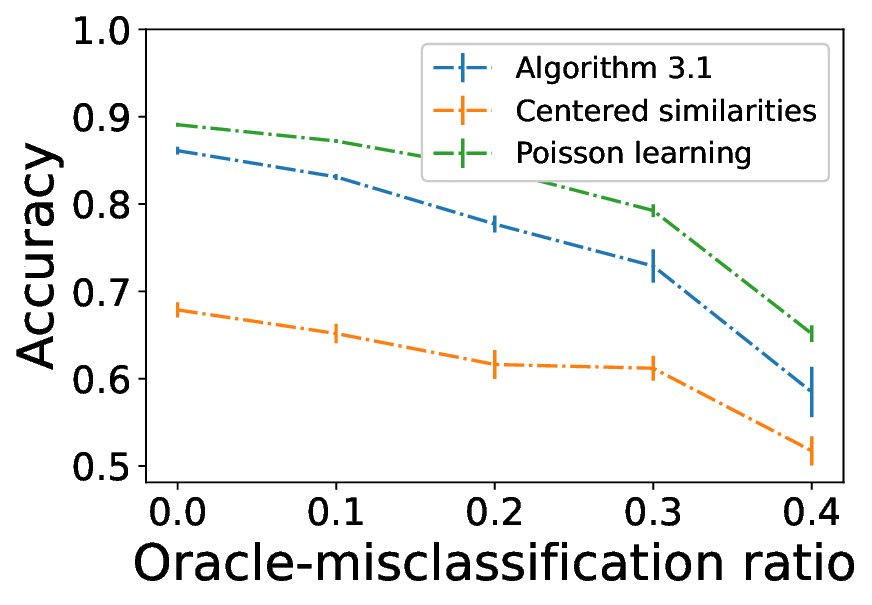}
 \caption{\textit{DBLP}}
\end{subfigure}
\caption{Average accuracy obtained on real networks by different semi-supervised algorithms as a function of the oracle-misclassification ratio. The number of labeled nodes is 30 for \textit{Political Blogs} and \textit{LiveJournal}, and $100$ for \textit{DBLP}. Accuracy is averaged over 50 random realizations, and the error bars show the standard error.
}
\label{fig:real_networks}
\end{figure}

\bibliographystyle{plain}
\bibliography{main.bib}

\appendix
\section{Derivation of the MAP}
\label{appendix_section_derivation_MAP}

\begin{proof}[Proof of Theorem~\ref{thm:MAP_dcsbm}]
 Bayes' formula gives
 $
 \pr(z \cond A, s) \propto \pr( A  \cond z, s  ) \ \pr( z \cond s ),
 $
 where the proportionality symbol hides $\pr(A \cond s)$-term independent of $z$.
 
The likelihood term can be rewritten as follows:
\begin{align*}
 \pr( A \cond  z, s ) \weq \pr( A \cond z)
 \, \propto \, \prod_{ \substack{ i<j \\ z_i = z_j } } \left( \frac{\pin}{\pout} \frac{1-\theta_i \theta_j \pout}{1 - \theta_i \theta_j \pin} \right)^{a_{ij}} \left( \frac{1- \theta_i \theta_j \pin}{ 1 - \theta_i \theta_j \pout } \right),
\end{align*}
where the proportionality hides a constant $ C = \prod\limits_{i<j} \left( \frac{\theta_i \theta_j \pout}{1 - \theta_i \theta_j \pout} \right)^{a_{ij}} \left( 1 -\theta_i \theta_j \pout \right)$ independent of~$z$. Hence,
\begin{align}
 \log \pr\left( A \cond z, s \right) & \weq \log C + \frac{1}{2} \sum_{i,j} 1( z_i \not= z_j ) \left(  \left( f^{(1)}_{ij} - f^{(0)}_{ij} \right) a_{ij}  + f^{(0)}_{ij} \right) \nonumber \\
 & \weq \log C + \frac{1}{2} \sum_{i,j= 1}^n \frac{1 - z_i z_j}{2} \left(  \left( f^{(1)}_{ij} - f^{(0)}_{ij} \right) a_{ij}  + f^{(0)}_{ij} \right) \nonumber \\
 & \weq \log C' - \frac{1}{4} x^T M x 
 \label{eq:in_proof_likelihood}.
\end{align}
for some constant $C'$ and $M = (F_1 - F_0) \odot A + F_0 $.
	
The oracle information, given by the term $\pr( z \cond s )$, is equal to
\begin{align}
 \pr(z \cond s) & \weq \prod_{i=1}^n \frac{\pr(s_i \cond z_i) }{\pr(s_i)} \pr( z_i ) \nonumber \\
 & \weq \left( \dfrac{\eta_1}{\eta_1 + \eta_0 }\right)^{ \big|\{ i\in \ell \colon z_i = s_i \} \big| } \ \left( \dfrac{\eta_0}{\eta_1 + \eta_0}\right)^{ \big|\{ i\in \ell  \colon z_{i} \not= s_{i} \} \big| } \ \left( \dfrac{1}{2} \right)^{ n }  \nonumber \\
 & \weq \left( \dfrac{\eta_0}{\eta_1}\right)^{  \big|\{ i \in \ell \, \colon z_{i} \not= s_{i} \} \big| } \left( \dfrac{\eta_1}{\eta_1+\eta_0} \right)^{  \big| \ell \big| } \left( \dfrac{1}{2} \right)^{ n  }, \label{eq:prob_sigma_S_simplified}
\end{align}
where we used $ \big|\{ i \in \ell \colon z_{i}  = s_{i} \} \big| +  \big|\{ i \in \ell \colon z_{i} \not= s_{i} \} \big| =  \big| \ell \big|$ in the last line.
Noticing that 
\begin{align*}
 \left| \{ i \in \ell : z_{i} \not= s_{i}  \} \right| 
 \weq \frac{1}{4} \sum_{i=1}^n \left( \left( \cP z\right)_{i}  - s_{i} \right)^2
 \weq \frac14  \left(  \cP z - s  \right)^T \left(  \cP z - s  \right), 
\end{align*}
yields
\begin{align}
 \log \pr\left( z \cond s \right) 
 & \weq - \frac{1}{4} \log \left( \frac{\eta_1}{\eta_0} \right) \cdot 
 \left\| \cP z - s  \right\|^2
 +  C',
\label{eq:in_proof_prior}
\end{align} 
where $C'$ is a term independent of $z$.

If $\eta_0 \not= 0$, the combination of Equations~\eqref{eq:in_proof_likelihood} and~\eqref{eq:in_proof_prior} with Bayes' formula gives Expression~\eqref{eq:MAP_dcsbm_SSL_noise}.
If $\eta_0 = 0$, then from Equation~\eqref{eq:prob_sigma_S_simplified} the term $\pr( z \cond s)$ is non-zero (and constant) if and only if $z_{i} = s_{i}$ for every labeled node $i \in [\ell]$, and we obtain Expression~\eqref{eq:MAP_dcsbm_SSL_no_noise}.
\end{proof}

\begin{proof}[Proof of Corollary~\ref{corollary:MAP_SBM}]
 The proof follows from Theorem~\ref{thm:MAP_dcsbm} and the fact that $f_{ij}^{ (0) } = \log \frac{ 1-\pin }{ 1-\pout }$ and $f_{ij}^{ (1) } = \log \frac{ \pin }{ \pout }$.
\end{proof}

\section{Lemmas related to mean-field solution of the secular equation}

\subsection{Spectral study of a perturbed rank-2 matrix}
\label{appendix:rank_2}

\begin{lemma}[Matrix determinant lemma]
\label{lemma:matrix_determinant_lemma}
 Suppose $A \in \R^n$ is invertible, and let $U, V$ be two $n$ by $m$ matrices. Then $\det(A + U V^T) = \det A \det(I_m + V^T A^{-1} U)$.
\end{lemma}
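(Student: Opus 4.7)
The plan is to apply the Schur complement determinant formula twice to a cleverly chosen block matrix, reading off each side of the desired identity from one of the two evaluations of a single determinant.

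Concretely, I would introduce the $(n+m)\times(n+m)$ block matrix
$$M \weq \begin{pmatrix} A & -U \\ V^T & I_m \end{pmatrix}.$$
Since $A$ is invertible by hypothesis, applying the Schur complement determinant formula with respect to the top-left block gives $\det M = \det(A)\,\det\bigl(I_m - V^T A^{-1}(-U)\bigr) = \det(A)\,\det(I_m + V^T A^{-1} U)$, which is precisely the right-hand side of the identity. On the other hand, the bottom-right block $I_m$ is trivially invertible, so applying the Schur complement formula with respect to it instead yields $\det M = \det(I_m)\,\det\bigl(A - (-U)\, I_m^{-1} V^T\bigr) = \det(A + UV^T)$, which is the left-hand side. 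Equating the two evaluations of $\det M$ proves the lemma.

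The Schur complement identity itself---namely $\det \begin{pmatrix} P & Q \\ R & S \end{pmatrix} = \det(P)\,\det(S - R P^{-1} Q)$ whenever $P$ is invertible---follows in one line from left-multiplying by the unit-determinant block lower triangular factor $\begin{pmatrix} I & 0 \\ -R P^{-1} & I \end{pmatrix}$, which eliminates the lower-left block and produces a block upper triangular matrix whose determinant is the product of the determinants of the two diagonal blocks $P$ and $S - R P^{-1} Q$. Consequently, there is essentially no substantive obstacle in this proof; the only bookkeeping care needed is the choice of the minus sign on the off-diagonal block $-U$ in $M$, which is tuned precisely so that the first Schur complement produces $+V^T A^{-1} U$ (matching the statement of the lemma) rather than its negative, and so that the second Schur complement produces $+UV^T$ rather than $-UV^T$.
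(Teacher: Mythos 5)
Your proof is correct and follows essentially the same route as the paper: both introduce the auxiliary block matrix $\begin{pmatrix} A & -U \\ V^T & I_m \end{pmatrix}$ and evaluate its determinant in two ways, once producing $\det(A)\det(I_m + V^T A^{-1}U)$ and once producing $\det(A+UV^T)$ via the Schur complement with respect to the identity block. The only cosmetic difference is that where you invoke the Schur complement formula with respect to the top-left block $A$, the paper writes out the corresponding block triangular factorization explicitly, which is the same computation.
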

\begin{proof}
 Take the determinant of 
	$
	\begin{pmatrix}
	A & -U \\ V^T & I
	\end{pmatrix} 
	=
	\begin{pmatrix}
	A & 0 \\
	V^T & I 
	\end{pmatrix} 
	.
	\begin{pmatrix}
	I & -A^{-1} U \\
	0 & I + V^T A^{-1} U
	\end{pmatrix}
	$
	and notice that 
	$
	\det \begin{pmatrix}
	A & -U \\ V^T & I
	\end{pmatrix}  = \det I \det\left( A  + U V^T \right)
	$
	by the Schur complement formula \cite[Section~0.8.5]{horn_johnson_2012}.
\end{proof}

\begin{proposition}
\label{prop:interesting_determinant}
 Let $M = Z B Z^T $, where $B = \begin{pmatrix}
 a & b \\ b & a
 \end{pmatrix}$ is a $2\times2$ matrix, and $Z = \begin{pmatrix}
 1_{n/2} & 0_{n/2} \\
 0_{n/2} & 1_{n/2}
 \end{pmatrix}$ is an $n\times2$ matrix. Let $m$ be an even number. We denote by $P_\cL$ the $n \times n$ diagonal matrix whose first $\frac{m}{2}$ and last $\frac{m}{2}$ diagonal elements are ones, all other elements being zeros. Then,
 $
 \det \Big( t I_n + \lambda P_\cL - M  \Big) \weq t^{n-m-2} (t+\lambda)^{m-2} (t-t_1^+) (t-t_1^-) (t-t_2^+) (t-t_2^-)
 $
 with
 \begin{align*}
  t_1^\pm & \weq  \dfrac{1}{2}\Bigg(  \frac{n}{2}(a+b) - \lambda \pm \sqrt{ \Big( \lambda + \frac{n}{2} (a+b) \Big)^2 - 2 (a+b) \lambda m }   \Bigg),  \\
  t_2^\pm & \weq  \dfrac{1}{2}\Bigg(  \frac{n}{2}(a-b) - \lambda \pm \sqrt{ \Big( \lambda + \frac{n}{2} (a-b) \Big)^2 - 2 (a-b) \lambda m }   \Bigg).
 \end{align*}
\end{proposition}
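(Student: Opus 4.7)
The plan is to exploit that $M = ZBZ^T$ has rank at most two, so that the bulk of $\det(tI_n + \lambda P_\cL - M)$ comes from the diagonal matrix $D := tI_n + \lambda P_\cL$, and the nontrivial part is a $2 \times 2$ determinant that I can then factor explicitly. The workhorse will be the matrix determinant lemma (Lemma~\ref{lemma:matrix_determinant_lemma}) applied with $A = D$, $U = -ZB$, $V = Z$, giving
\[
\det(D - ZBZ^T) \weq \det(D)\,\det\bigl(I_2 - Z^T D^{-1} Z \, B\bigr).
\]

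Since $D$ is diagonal with $t+\lambda$ at the $m$ labeled positions and $t$ elsewhere, I immediately get $\det(D) = t^{n-m}(t+\lambda)^m$. For the $2\times 2$ factor, I will use the block structure of $Z$ together with the symmetric placement of the labeled nodes in $P_\cL$: each cluster contains exactly $m/2$ labeled and $(n-m)/2$ unlabeled nodes, the two columns of $Z$ have disjoint supports, so the off-diagonal entries of $Z^T D^{-1} Z$ vanish and the diagonal entries are both equal to
\[
c \weq \frac{m/2}{t+\lambda} + \frac{(n-m)/2}{t} \weq \frac{nt + (n-m)\lambda}{2t(t+\lambda)}.
\]
Hence $Z^T D^{-1} Z = c\,I_2$ and $\det(I_2 - cB) = (1 - c(a+b))(1-c(a-b))$, using that the eigenvalues of $B = \bigl(\begin{smallmatrix}a&b\\b&a\end{smallmatrix}\bigr)$ are $a\pm b$.

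The last step is purely algebraic: each factor $1 - c(a\pm b)$, when multiplied by $2t(t+\lambda)$, becomes a quadratic in $t$. For the $(a+b)$-factor,
\[
2t(t+\lambda) - (a+b)\bigl[nt + (n-m)\lambda\bigr] \weq 2t^2 + \bigl[2\lambda - (a+b)n\bigr]t - (a+b)(n-m)\lambda,
\]
whose roots, after completing the square inside the discriminant as $[\lambda - (a+b)n/2]^2 + 2(a+b)(n-m)\lambda = [\lambda + (a+b)n/2]^2 - 2(a+b)m\lambda$, match exactly the stated $t_1^{\pm}$; the same computation with $b$ replaced by $-b$ yields $t_2^{\pm}$. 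Pulling all pieces together, the two denominators $2t(t+\lambda)$ cancel against $\det D$ to leave
\[
\det(D - M) \weq t^{n-m-2}(t+\lambda)^{m-2}(t-t_1^+)(t-t_1^-)(t-t_2^+)(t-t_2^-),
\]
as claimed.

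I do not anticipate a genuine obstacle: the only delicate point is bookkeeping in the discriminant simplification, where one must recognize that the ``$(a+b)n\lambda$'' cross term has exactly the right sign to let one repackage the discriminant as $[\lambda + (a+b)n/2]^2 - 2(a+b)m\lambda$. If the labeled set were not symmetrically split between the two clusters, then $Z^T D^{-1} Z$ would not be a scalar multiple of $I_2$ and the factorization into two independent quadratics in $(a+b)$ and $(a-b)$ would break down; so the real content of the statement is that the symmetric placement of the labels in $P_\cL$ decouples the ``sum'' and ``difference'' spectra of $B$.
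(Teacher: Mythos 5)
Your proof is correct and follows essentially the same route as the paper's: the matrix determinant lemma reduces everything to $\det\bigl(I_2 - Z^T(tI_n+\lambda P_\cL)^{-1}ZB\bigr)$ with $Z^T(tI_n+\lambda P_\cL)^{-1}Z = \frac{nt+(n-m)\lambda}{2t(t+\lambda)}I_2$, and your quadratic factorization and discriminant rearrangement match the paper's $P_1$, $P_2$ exactly. The only detail the paper adds is a one-line remark that the identity, derived for $t\notin\{0,-\lambda\}$ where $tI_n+\lambda P_\cL$ is invertible, extends to all $t$ by continuity of the determinant in $t$; you should include that (or note both sides are polynomials agreeing off a finite set).
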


\begin{proof}For now, assume that $t \not = - \lambda$ and $t\not = 0$. Then, $t I_n + \lambda \, P_\cL $ is invertible, and by Lemma~\ref{lemma:matrix_determinant_lemma},
	\begin{align}
	\det \Big( t I_n + \lambda P_\cL - M  \Big)  & \weq \det( tI_n + \lambda P_\cL ) \det\Big(I_2 + Z^T (t I_n + \lambda P_\cL )^{-1} (-ZB) \Big) \nonumber \\
	& \weq  (t+\lambda)^{m} t^{n-m} \det\Big(I_2 - Z^T (t I_n + \lambda P_\cL )^{-1} ZB \Big) \label{eq:in_proof_after_determinant_lemma}.
	\end{align}
	Moreover,
	\begin{align*}
	\big( t I_n + \lambda \, P_\cL \big)^{-1} 
	\weq \dfrac{1}{t}(I_n - P_\cL) + \dfrac{1}{t+\lambda} P_\cL
	\weq \dfrac{1}{t} I_n - \dfrac{\lambda}{t (t+\lambda)} P_\cL.
	\end{align*}
	Therefore, we can write
	\begin{align*}
	Z^T \big( t I_n + \lambda \, P_\cL \big)^{-1} ZB 
	\weq \dfrac{1}{t} Z^T Z B - \dfrac{\lambda}{t(t+\lambda) } Z^T P_{\cL} ZB 
	 \weq \dfrac{1}{t} \dfrac{n}{2} B - \dfrac{\lambda}{t(t+\lambda)} \dfrac{m}{2} B 
	\weq x B,
	\end{align*}
	where $x := \dfrac{n}{2} \dfrac{1}{t(t+\lambda)} \left( t + \lambda \left( 1 - \dfrac{m}{n} \right) \right) $.
	Thus, a direct computation of the determinant gives
	\begin{align*}
	\det \Big(I_2 - Z^T \big( t I_n + \lambda \, P_\cL \big)^{-1} ZB \Big) \weq \Big(1-x(a+b)\Big) \Big(1-x(a-b)\Big).
	\end{align*}
	Going back to equation~\eqref{eq:in_proof_after_determinant_lemma},
	we can write
	\begin{align}\label{eq:in_proof_almost_over}
	\det \Big( t I_n + \lambda P_\cL - M  \Big) & \weq (t+ \lambda)^{m-2} t^{n-m-2} P_1(t) P_2(t),
	\end{align}
	with $P_1(t) = t(t+\lambda) - \frac{n}{2} (a+b) \big(t+\lambda(1-\frac{m}{n}) \big) $ and $P_2(t) = t(t+\lambda) - \frac{n}{2} (a-b) \big(t+\lambda(1-\frac{m}{n}) \big) $.
	Since $t \in \R \mapsto \det (t I_n + \lambda P_\cL - M )$ is continuous (even analytic), expression~\eqref{eq:in_proof_almost_over} is also valid for $t= 0$ and $t = - \lambda$ \cite{AFH2013}. 
	We end the proof by observing that
	\begin{align*}
	P_1(t)  \weq (t-t_1^+) (t-t_1^-)
	\qquad \text{ and } \qquad
	P_2(t) \weq (t-t_2^+) (t-t_2^-),
	\end{align*}
	where $t_1^\pm$ and $t_2^\pm$ are defined in the proposition's statement.
\end{proof}

\begin{corollary}
\label{cor:spectrum_Ltilde} 
Let $A$ be the adjacency matrix of a DC-SBM with $\pin>\pout>0$, and $s$ be the oracle information.
 Let  $\lambda, \tau >0$, and $\bar{d}_\tau = \frac{n}{2}\left( \pin + \pout \right) - n \tau$, $\bar{\alpha} = \frac{n}{2}\left( \pin - \pout \right)$.
 Let $A_\tau := A - \tau 1_n 1_n^T$ and $P_\cL$ be the diagonal matrix whose element $(P_\cL)_{ii}$ is $1$ if $s_i \not=0$, and $0$ otherwise.
 Then, the spectrum of  $\E \tilde{\cL} =- \E A_\tau + \lambda \cP - \gamma I_n $ is
 $
 \left\{ -\gamma - t_1^\pm; -\gamma - t_2^\pm; -\gamma; -\gamma + \lambda;  0  \right\},
 $
 where 
 \begin{align*}
 t_1^\pm & \weq  \dfrac{1}{2}\Bigg(  \bar{d}_\tau - \lambda \pm \sqrt{ \left( \lambda + \bar{d}_\tau \right)^2 - 4 \bar{d}_\tau \lambda \left( \eta_1 + \eta_0 \right) } \Bigg),  \\
 t_2^\pm & \weq \dfrac{1}{2} \Bigg( \bar{\alpha} - \lambda \pm \sqrt{ \Big( \lambda + \bar{\alpha} \Big)^2 - 4 \bar{\alpha} \lambda \left( \eta_1 + \eta_0 \right) }   \Bigg).
 \end{align*}
\end{corollary}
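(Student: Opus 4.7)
} The strategy is to show that $\mathbb{E}\,\widetilde{\cL}$ has exactly the block-plus-rank-two structure handled by Proposition~\ref{prop:interesting_determinant}, and then read off the spectrum from the factorization of the characteristic polynomial given there. First, I would rewrite $\mathbb{E} A_\tau$ in the form $Z B' Z^T$ with $B' = \bigl(\begin{smallmatrix} a & b \\ b & a \end{smallmatrix}\bigr)$, using the identity $1_n = Z\,1_2$, which gives $1_n 1_n^T = Z J_2 Z^T$ with $J_2$ the $2\times 2$ all-ones matrix, and hence $\mathbb{E} A_\tau = Z(B - \tau J_2)Z^T$ with $a = \pin - \tau$, $b = \pout - \tau$. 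Since the number of labeled nodes in the mean-field setting of Subsection~\ref{subsection:MF_analysis} is $m = (\eta+\theta)n$ split symmetrically between the two clusters, $P_\cL$ satisfies the hypothesis of Proposition~\ref{prop:interesting_determinant} with that choice of $m$.

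Second, I would translate the eigenvalue equation for $\mathbb{E}\,\widetilde{\cL} = \alpha^{MF} I_n - \mathbb{E} A_\tau + \lambda P_\cL$ into the form covered by the proposition. With the substitution $t = \alpha^{MF} - \mu$,
\[
\det(\mu I_n - \mathbb{E}\,\widetilde{\cL}) \;=\; (-1)^n \det\bigl( t\,I_n + \lambda P_\cL - Z B' Z^T \bigr),
\]
and Proposition~\ref{prop:interesting_determinant} factors the right-hand side as
\[
t^{\,n-m-2}(t+\lambda)^{m-2}(t - t_1^+)(t - t_1^-)(t - t_2^+)(t - t_2^-).
\]
The zeros $t \in \{0, -\lambda, t_1^\pm, t_2^\pm\}$ therefore correspond, via $\mu = \alpha^{MF} - t$, to the eigenvalues $\alpha^{MF}$ (multiplicity $n-m-2$), $\alpha^{MF} + \lambda$ (multiplicity $m-2$), and $\alpha^{MF} - t_{1}^\pm$, $\alpha^{MF} - t_{2}^\pm$, which is exactly the set stated in the corollary.

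Third, to finish, I would substitute $\tfrac{n}{2}(a-b) = \tfrac{n}{2}(\pin - \pout) = \alpha^{MF}$ and $m = (\eta+\theta)n$ into the formula for $t_2^\pm$ from the proposition; a direct simplification yields
\[
t_2^\pm = \tfrac12\Bigl(\alpha^{MF} - \lambda \pm \sqrt{(\lambda + \alpha^{MF})^2 - 4\,\alpha^{MF}\lambda\,(\eta+\theta)}\Bigr),
\]
as claimed. The analogous substitution for $t_1^\pm$ uses $\tfrac{n}{2}(a+b) = \tfrac{n}{2}(\pin + \pout) - n\tau$, which is where I expect the main (mild) obstacle: the stated formula for $t_1^\pm$ in the corollary is written in terms of $d$ rather than $d - n\tau$, so I would either (i) note that the $\tau\,1_n1_n^T$ correction shifts only the bulk/top eigenvalue associated with the constant eigenvector and record the slightly corrected $t_1^\pm$, or (ii) observe that none of the downstream results use $t_1^\pm$ (the bound in Theorem~\ref{thm:concentration_MF_SSL-SC-noisy} depends only on $\alpha^{MF} - t_2^+$), so the discrepancy is inconsequential. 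The only other subtlety is justifying that the labeled set in the mean-field model is split exactly in half across the clusters, which matches the symmetry assumption already adopted in the proof of Proposition~\ref{prop:noisy_data_MF_solution}.
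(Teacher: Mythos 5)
Your proposal is correct and follows essentially the same route as the paper: write $\E A_\tau = Z(B-\tau J_2)Z^T$ (with $J_2$ the all-ones $2\times2$ matrix) and apply Proposition~\ref{prop:interesting_determinant} to the shifted characteristic polynomial $\det\bigl((\alpha^{MF}-\mu)I_n + \lambda P_\cL - \E A_\tau\bigr)$. Your observation that the stated $t_1^\pm$ should involve $\tfrac{n}{2}(a+b) = d - n\tau$ rather than $d$ is a genuine (if inconsequential, since only $\alpha^{MF}-t_2^+$ enters the downstream bounds) discrepancy that the paper's one-line proof silently glosses over, as is your remark that one must check the labeled nodes split evenly between the two clusters so that $Z^T P_\cL Z = \tfrac{m}{2} I_2$.
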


\begin{proof}
Let $M = \begin{pmatrix}
	\pin - \tau & \pout - \tau \\
	\pout - \tau & \pin - \tau
	\end{pmatrix}$ and $Z = \begin{pmatrix}
	1_{n/2} & 0_{n/2} \\
	0_{n/2} & 1_{n/2}
	\end{pmatrix}$. 
	Then, we notice that  
	$\E A_\tau = Z M Z^T$ and we can apply Proposition~\ref{prop:interesting_determinant} to compute the characteristic polynomial of $\E \tilde{\cL}$. For $x\in \R$,
	$
	\det \left( \E \tilde{\cL} - x I_n \right) \weq \det \Big( (-\gamma - x) I_n - \E A_\tau + \lambda \cP \Big) ,
	$
	whose roots are $-\gamma - t_1^{\pm}, -\gamma-t_2^{\pm}$, $-\gamma$, and $-\gamma + \lambda$.
\end{proof}

\subsection{Bounds for \texorpdfstring{$\bgamma_*$}{gamma*}}
\label{appendix:estimation_gamma_starMF}

\begin{lemma}
\label{lemma:bounding_gamma_star_mf}
Let $\bar{\gamma}_*$ be the solution of Equation~\eqref{eq:explicit_secular_equation} for the mean-field model.
Then,
\begin{align*}
    -\bar{\alpha} ( 1-2\eta_0) 
    \wle
    \bar{\gamma}_*
    \wle - \bar{ \alpha }.
\end{align*}
\end{lemma}

\begin{proof}
For $\lambda \ge 0$, we denote by $(\bar{x}_\lambda, \bar{\gamma}_*(\lambda) )$ the solution of the system~\eqref{eq:constraint_linear_system} on a mean-field DC-SBM.
The proof is in two steps. 
First, let us show that $\bgamma_*(0) = -\balpha$ and $\bgamma_*(\infty) = -\balpha(1-2\eta_0)$.
For $\lambda = 0$, the constrained linear system~\eqref{eq:constraint_linear_system} reduces to an eigenvalue problem, and hence $\bgamma_*(0)$ equals $-\alpha$, the smallest eigenvalue of $-\E A_{\tau}$.
Moreover, when $\lambda = \infty $, the hard constraint $x_{\ell} = \bs_{\ell}$ is enforced, and the system~\eqref{eq:constraint_linear_system} becomes
\begin{align*}
 \left\{
 \begin{array}{rl}
 (-\E A_{\tau} - \bgamma_*(\infty) I_n)_{uu} \bx_u 
 & \weq (\E A_{\tau})_{u\ell} \bs_\ell \\
 \bx_u^T \bx_u 
 & \weq n (1 - \eta_0 - \eta_1)
 \end{array}
 \right.
\end{align*}
and we verify by hand that $ \bgamma_*(\infty) = - \balpha(1-2\eta_0)$ together with $\bx_u = Z_u$ is indeed the solution.

Second, if we let $C_\lambda(x) = -x^T \E A_\tau x + \lambda ( \bs - \cP x)^T ( \bs - \cP x)$ be the cost function minimized in~\eqref{eq:MAP_relaxed}, then from Equation~\eqref{eq:constraint_linear_system} we have $\bgamma_*(\lambda_1) - \bgamma_*(\lambda_2) = C_{\lambda_1}(\bx_1) - C_{\lambda_2}(\bx_2) + \lambda_1 \bx_1^T \bs - \lambda_2 \bx_2^T  \bs$. Since $\lambda \mapsto C_\lambda(x)$ is increasing, then $\lambda_1 \le \lambda_2$ implies $C_{\lambda_1}(\bx_1) \le C_{\lambda_2}(\bx_2)$. Since $\bx_\lambda^T \bs \ge 0$ (if it was not the case, then $C_\lambda(-\bx_\lambda) \le C_\lambda(\bx_\lambda)$, and hence $\bx_\lambda \not= \argmin_{x \in \R^n} C_\lambda(x)$), we can conclude that $\bgamma_*(0) \le \bgamma_*(\lambda)$ and that $\bgamma_*(\lambda) \le \bgamma_*(\infty)$.
\end{proof}

\subsection{Concentration of \texorpdfstring{$\gamma_*$}{gamma*}}
\label{appendix:concentration_gammastar}

\begin{proposition}
\label{prop:concentration_gamma_*}
Let $\gamma_*$ and $\bar{\gamma}_*$ be the solutions of Equation~\eqref{eq:constraint_linear_system} for a DC-SBM and the mean-field DC-SBM, respectively. Then
    \[
    	    \left| \gamma_* - \bar{\gamma}_* \right| 
    	    \wle 
    	    \left( 1 + \frac{\left( \bar{\alpha} + \lambda \right)^3 }{ 2 \sqrt{\eta_1+\eta_0} ( \eta_1 - \eta_0 )  \bar{\alpha}^2 \lambda  } \right) \sqrt{ \bar{d} }.
    \]
\end{proposition}

\begin{proof}
 The gradient with respect to 
 $(\bdelta_1,...,\bdelta_n, \bb_1,...,\bb_n, \gamma)$
 of the left-hand-side of Equation~\eqref{eq:explicit_secular_equation} is equal to
	\[
	2 \sum_{i=1}^n \frac{ \bb_i }{ \bdelta_i - \bgamma }
	\left[ \frac{ \Delta b_i }{ \bdelta_i - \bgamma_* } - \frac{ \bb_i \Delta \delta_i }{ ( \bdelta_i - \bgamma_* )^2 }
	+ \frac{ \bb_i \Delta \gamma}{ ( \bdelta_i - \bgamma_* )^2 } \right].
	\]
	Thus, we have
	\[
	\Delta \gamma \sum_{i=1}^n 
	\frac{ \bb_i^2 }{ ( \bdelta_i - \bgamma_* )^3 }
	\weq
	\sum_{i=1}^n 
	\frac{ \bb_i^2 }{ ( \bdelta_i - \bgamma_*)^3 } \Delta \delta_i -
	\sum_{i=1}^n \frac{ \bb_i }{ ( \bdelta_i - \bgamma_* )^2 } \Delta b_i + o \left( \Delta \delta_i, \Delta b_i \right).
	\]
	Firstly, we see that for all $i \in [n]$,
	$\Delta \delta_i = \left| \delta_i - \bar{\delta}_i \right| \le \left\| A - \E A \right\| \le \bar{d}$ by the concentration of the adjacency matrix of a DC-SBM graph.
	Therefore, using this fact and $\bar{\gamma}_* \le \bar{\delta}_1 \le \bar{\delta}_2 \le \cdots \le \bar{\delta}_n$,
	\begin{align*}
	\Delta \gamma \weq  \left| \gamma_* - \bar{\gamma}_* \right| 
	& \wle \max_{i} \left| \delta_i - \bdelta_i \right|  
	+ \frac{ \max_{i} \frac{1}{ ( \bdelta_i - \bgamma_*)^2 } }{ \min_{i} \frac{1}{ ( \bdelta_i - \bgamma_* )^3 } }  \frac{ \sum_{i} | \bb_i | \cdot | b_i - \bb_i | }{ \sum_i \bb_i^2 } \\
	& \wle  \sqrt{ \bar{d} }  + \frac{ \max_i \left( \bdelta_i - \bgamma_* \right)^3 }{ \min_i \left( \bdelta_i - \bgamma_* \right)^2 } \frac{\sum_{i} |  \bb_i | \cdot | b_i - \bb_i |}{ \sum_i \bb_i^2 }.
	\end{align*}
 We notice that $\min_i | \bar{\delta}_i - \bar{\gamma}_* | = \bar{\delta}_1 - \bar{\gamma}_*$.
 By using Lemma~\ref{lemma:bounding_gamma_star_mf} and the expression of $\bar{\delta}_1$ given in Corollary~\ref{cor:spectrum_Ltilde}, we have
 \[
 \min_i | \bdelta_i - \bgamma_* | \wge \bar{\alpha} + \lambda.
 \]
 Similarly, $\max_i | \bar{\delta}_i - \bar{\gamma}_* | = \bar{\delta}_n - \bar{\gamma}_* = \bar{\delta}_n - \bar{\delta}_1 + \bar{\delta}_1 - \bar{\gamma}_*$. Corollary~\ref{cor:spectrum_Ltilde} implies $\bar{\delta}_n = \lambda$ and $\bar{\delta}_1 = \frac{1}{2} \left( \lambda - \bar{\alpha} - \sqrt{ \left(\lambda + \bar{\alpha}\right)^2 - 4\bar{\alpha} \lambda (\eta_0+\eta_1)}\right)$, thus $\bar{\delta}_n - \bar{\delta}_1 \le \bar{\alpha} + \lambda$. Hence, using Lemma~\ref{lemma:bounding_gamma_star_mf},
 \[
 \max_i | \bdelta_i - \bgamma_* |  
 \wle \frac{3}{2} \left( \balpha + \lambda \right).
 \]
 Therefore, we have
 \begin{align}
 \label{eq:in_proof_bounding_delta_gamma}
 \left| \gamma_* - \bar{\gamma}_* \right| 
 & \wle \sqrt{ \bar{d} }  + \frac{27}{8} ( \bar{\alpha} + \lambda ) \cdot \frac{\sum_{i} | \bb_i | \cdot | b_i - \bb_i |}{ \sum_i \bb_i^2 }.
 \end{align}
 The term $\frac{\sum_{i} | \bb_i | \cdot | b_i - \bb_i | }{ \sum_i \bb_i^2}$ can be bounded as follow.
 Let $\cI = \{ i \in [n] \colon \bb_i \not=0 \}$.
 Then
 \begin{align*}
 \sum_{i} | \bb_i | \cdot | b_i - \bb_i |
 & \wle \max_{i \in \cI} | b_i - \bb_i | \cdot \sum_{i \in \cI} \left| \bb_i \right|.
 \end{align*}
 Combining the Cauchy–Schwarz inequality
 \begin{align*}
 \left| b_i - \bb_i \right| 
 \weq \lambda \left| ( Q_{\cdot i} - \bar{Q}_{\cdot i})^T \bs \right|
 \wle \lambda \left\| Q_{\cdot i} - \bar{Q}_{\cdot i} \right\|_2 \cdot \| \bs \|,
 \end{align*}
 with the Davis-Kahan theorem~\cite{Yu_Wang_Samworth_2015}
 \begin{align*}
 \left\| Q_{\cdot i} - \bar{Q}_{\cdot i} \right\|_2 & \wle \frac{2^{3/2} \left\| A - \E A\right\| }{ \min \left\{ \bar{\delta}_i - \bar{\delta}_{i-1}, \bar{\delta}_{i+1} - \bar{\delta}_i \right\} },
 \end{align*}
 $\| \bs \| = \sqrt{(\eta_0+\eta_1)n}$, and the concentration of $A$ towards $\E A$, yields
 \begin{align*}
 \max_{i \in \cI} | b_i - \bb_i |
 \wle 
 \frac{\lambda \sqrt{ (\eta_0+\eta_1) n } }{
 \min_{i \in \cI } \left\{ \bdelta_i - \bdelta_{i-1}, \bdelta_{i+1} - \bar{\delta}_i \right\} }
 \cdot 2^{3/2} \sqrt{\bar{d}}.
 \end{align*}
 Using Lemma~\ref{lemma:estimations_di}, we see that $\cI = \{ i \in [n] : \delta_i \not\in \{0, t_1^-\} \}$. Combining it with Corollary~\ref{cor:spectrum_Ltilde}, gives
 \begin{align*}
    \min_{i \in \cI } \left\{ \bar{\delta}_i - \bar{\delta}_{i-1}, \bar{\delta}_{i+1} - \bar{\delta}_i \right\}
	& \weq \lambda + t_2^+ \\
	& \weq \frac{\alpha+\lambda}{2} \left( 1 - \sqrt{ 1 - 4 \frac{\alpha \lambda}{(\alpha+\lambda)^2} (\eta_0+\eta_1) } \right) \\
	& \wge \frac{\alpha \lambda}{\alpha + \lambda} (\eta_0+\eta_1),
 \end{align*}
  where we used $\sqrt{1-x} \le 1 - x/2$.
 Therefore,
 \[
 \max_{i \in \cI} \left| b_i - \bb_i \right|
 \wle 
 2^{3/2} \sqrt{ \frac{ n \bd }{ \eta_0+\eta_1 } } \cdot \frac{\alpha + \lambda}{\alpha} .
 \]
 Finally, Lemma~\ref{lemma:bound_ratio_sumbi_sumbisquared} ensures that
 \begin{align*}
   \frac{\sum_i \left| \bb_i \right| }{ \sum_i \bb_i^2 } 
   \wle \frac{2}{\sqrt{n} (\eta_1-\eta_0)} \cdot \frac{\lambda + \alpha}{\alpha \lambda} \left( 1 + \frac{2 \eta_0 n \sqrt{\eta_1+\eta_0}}{ \lambda } \right).
 \end{align*}
Therefore,
 \begin{align*}
   \frac{\sum_i  \left| \bb_i \right| \cdot \left| b_i - \bb_i \right| }{\sum_i \bb_i^2} 
   & \wle 2^{5/2} \left( \frac{\alpha+\lambda}{\alpha} \right)^2  \frac{ \sqrt{\bd} }{ (\eta_1-\eta_0)\sqrt{\eta_0+\eta_1} } \left( 1 + \frac{2 \eta_0 n \sqrt{\eta_1+\eta_0}}{ \lambda } \right) \\
   & \wle  2^{3} \left( \frac{\alpha+\lambda}{\alpha} \right)^2  \frac{ \sqrt{\bd} }{ (\eta_1-\eta_0)\sqrt{\eta_0+\eta_1} }. 
 \end{align*}
 where we used the condition $2 \eta_0 n \sqrt{\eta_1+\eta_0} \ll \lambda$.
 Going back to inequality~\eqref{eq:in_proof_bounding_delta_gamma}, this implies that
  $
  \left| \gamma_* - \bar{\gamma}_* \right|  \wle \left( 1 + \frac{27}{2^{6} } \frac{(\alpha+\lambda)^3}{\alpha^2 \lambda} \frac{1}{(\eta_1-\eta_0) \sqrt{\eta_0+\eta_1}} \right) \sqrt{ \bar{d} },
 $ and this concludes the proof. 
\end{proof}

\begin{lemma}
\label{lemma:estimations_di}
Let $-\E A_\tau + \lambda \cP = \bar{Q} \bar{\Delta} \bar{Q}^T$, where $\bar{\Delta} = \diag\left( \bdelta_1, \dots , \bdelta_n\right)$ and $\bar{Q}^T \bar{Q} = I_n$. Denote $\bb = \lambda \bar{Q}^T s$. We have
$
\bb_1 \wge \sqrt{n} \frac{\lambda (\eta_1 - \eta_0)}{2} \frac{ \bar{ \alpha } }{ \lambda + \bar{ \alpha } }.
$
Moreover, $\bb_i = 0$ if $\bdelta_i = 0$ or if $\bdelta_i = - t_1^-$.
\end{lemma}
\begin{proof}
First, from Corollary~\ref{cor:spectrum_Ltilde},
$
\bar{ \delta }_1 = -t_2^+ = - \frac{1}{2}\Bigg(  \bar{ \alpha } - \lambda + \sqrt{ \Big( \lambda + \bar{ \alpha } \Big)^2 - 4 \bar{ \alpha } \lambda \left( \eta_1 + \eta_0 \right) } \Bigg).
$
By symmetry, the $i$-th component of the first eigenvector $\bar{Q}_{\cdot 1}$ (associated with $\bar{\delta}_1$) is equal to
\begin{align*}
    \begin{cases}
        v_1 \, Z_i & \quad \text{if } i \in [\ell], \\
        v_0 \, Z_i   &\quad \text{if } i \not \in [\ell],
\end{cases}
\end{align*}
where $v_1$ and $v_0$ are to be determined.
Thus, the equation $\left( -\E A_\tau + \lambda \cP \right) \bar{Q}_{\cdot 1} = \bar{\delta}_1 \bar{Q}_{\cdot 1}$ leads to
\begin{align*}
    \begin{cases}
        \bar{ \alpha } \left( (\eta_1 + \eta_0) v_1 + (1 - \eta_1 - \eta_0) v_0 \right) & \weq -t_2^+ v_0 \\
        \bar{ \alpha } \left( (\eta_1 + \eta_0) v_1 + (1-\eta_1 - \eta_0) v_0 \right) + \lambda v_1   & \weq -t_2^+ v_1,
\end{cases}
\end{align*}
which, given the norm constraint $\|v\|_2 = 1$, yields
\begin{align*}
 \begin{cases}
  v_1 & \weq \dfrac{1}{\sqrt{n}}   \frac{ t_2^+ }{ \sqrt{ (\eta_1+\eta_0) \left( t_2^+ \right)^2 + (1-\eta_1 - \eta_0) \left( t_2^+ + \lambda \right)^2 } }, \\
  v_0 & \weq \dfrac{1}{\sqrt{n}} \frac{ + t_2^+ + \lambda}{ \sqrt{ (\eta_1+\eta_0) \left( t_2^+ \right)^2 + (1-\eta_1 - \eta_0) \left( t_2^+ + \lambda \right)^2 }}.
 \end{cases}
\end{align*}
Since $\bb_1 = \lambda v^T \bs = \lambda (\eta_1 - \eta_0) n v_1$, we have
\begin{align*}
 \frac{ \bb_1 }{ \sqrt{n} }
  & \weq \lambda (\eta_1 - \eta_0) \frac{t_2^+}{ \sqrt{ (\eta_1+\eta_0) \left( t_2^+ \right)^2 + (1-\eta_1 - \eta_0) \left( t_2^+ + \lambda \right)^2 } }.
\end{align*}
The proof ends by noticing that $t_2^+ \geq \frac{\bar{ \alpha }}{2}$ and $t_2^+ \leq \bar{ \alpha }$. Indeed,
\begin{align*}
 \frac{ \bb_1 }{ \sqrt{n} } & \wge \lambda(\eta_1 - \eta_0) \frac{ \bar{ \alpha } }{2 \sqrt{ (\eta_1 + \eta_0) \bar{ \alpha }^2 + (1-\eta_1-\eta_0) (\bar{ \alpha } + \lambda)^2 } } \\
 & \wge \frac{\lambda(\eta_1 - \eta_0)}{2} \frac{ \bar{ \alpha } }{ \left( \bar{ \alpha } + \lambda \right) \sqrt{ (\eta_1 + \eta_0) \left( \frac{\bar{ \alpha }}{\bar{ \alpha } + \lambda } \right)^2 + 1-\eta_1 - \eta_0 }  } \\
 & \wge \frac{\lambda (\eta_1 - \eta_0) }{ 2 } \frac{ \bar{ \alpha } }{ \lambda + \bar{ \alpha } }.
\end{align*}
This proves the first claim of the lemma.

Similarly, by symmetry the $i$-th component of the eigenvector $v'$ associated with $-t_1^-$ equals
$v_{\ell}'$ if $i \in \ell$, and $v_u'$ otherwise, and therefore $(v')^T s = 0$. 

Finally, let $I_0 := \{ i \in [n] : \bar{\delta}_i = 0 \}$. By Corollary~\ref{cor:spectrum_Ltilde}, we have $|I_0| = n (1-\eta_1-\eta_0) - 2$. Since $0$ is also eigenvalue of order $n(1-\eta_0-\eta_1)-2$ of the extracted sub-matrix $\left( -\E A_\tau + \lambda \cP \right)_{u,u} = \left( -\E A_\tau \right)_{u,u} $, we have for all $k \in I_0$, $\bar{Q}_{ik} = 0$ for every $i \in [n]$. Therefore, for $k \in I_0$, $b_k = \lambda \bar{Q}^T_{\cdot k} s = 0$.
\end{proof}

\begin{lemma}
\label{lemma:bound_ratio_sumbi_sumbisquared}
 Let $-\E A_\tau + \lambda \cP = \bar{Q} \bar{\Delta} \bar{Q}^T$, where $\bar{\Delta} = \diag\left( \bdelta_1, \dots , \bdelta_n\right)$ and $\bar{Q}^T \bar{Q} = I_n$. Denote $\bb = \lambda \bar{Q}^T s$ and let $\cI = \{ i \in [n] \colon \bb_i \not=0 \}$. We have
 $
 \frac{ \sum_{i \in \cI} |\bb_i| }{ \sum_{i \in \cI} |\bb_i|^2} \wle \frac{2}{\sqrt{n} (\eta_1-\eta_0)} \cdot \frac{\lambda + \alpha}{\alpha \lambda} \left( 1 + \frac{2 \eta_0 n \sqrt{\eta_1+\eta_0}}{ \lambda } \right)
 $. 
\end{lemma}
\begin{proof}
 Using Lemma~\ref{lemma:estimations_di}, we see that $\cI = \{ i \in [n] : \delta_i \not\in \{0, t_1^-\} \}$. Thus,
 \begin{align*}
     \frac{ \sum_{i \in \cI} |\bb_i| }{ \sum_{i \in \cI} |\bb_i|^2} 
     \weq \frac{|b_1| + \sum_{i \colon \delta_i = \lambda} |\bb_i| }{ |\bb_1|^2 + \sum_{i \colon \delta_i = \lambda} |\bb_i|^2}
     \wle \frac{ 1 }{ |\bb_1| } + \frac{ \sum_{i \colon \delta_i = \lambda} |\bb_i| }{ |\bb_1|^2}, 
 \end{align*}
 where $\bb_1$ denotes the element of vector $\bb$ corresponding to eigenvalue $\delta_1 = -t_2^+$. By Lemma~\ref{lemma:estimations_di}, we have $\bb_1 \wge \sqrt{n} \frac{\lambda (\eta_1 - \eta_0)}{2} \frac{ \bar{ \alpha } }{ \lambda + \bar{ \alpha } }$. Hence,
 \begin{align}
 \label{eq:first_inequation}
     \frac{ 1 }{ |\bb_1|}  
     \wle \frac{\lambda + \balpha}{ \balpha \lambda } \frac{2}{(\eta_1-\eta_0) \sqrt{n}}.
 \end{align}
 We note that the eigenvalue $\delta_i = \lambda$ is of multiplicity $\eta n - 2$. Let us denote by $\{v_i\}$ the corresponding $\eta n - 2$ orthonormal eigenvectors associated with eigenvalue $\lambda$. Let $v_{ij}$ denote the $j$-th entry of $v_i$. We notice from the block structure of $-\E A_{\tau} + \lambda \cP$ that $v_{ij} = 0$ if $j \notin \ell$. Moreover, if we let $\tv_i$ be the restriction of $v_i$ to $\ell$, then $\tv_i$ belongs to the kernel of $ \left( -\E A_\tau \right)_{\ell \ell}$. Therefore, $\sum_{j \in \ell} \tv_{ij} = 0$, and   
 \begin{align*}
     \bb_i \weq \lambda v_i^T s \weq -2 \lambda \sum_{j \in \ell_0} \tv_{ij}, 
 \end{align*}
 where $\ell_0 = \{ j \in \ell \colon s_i \ne z_i\}$ is the set of nodes mislabeled by the oracle. 
 Hence, 
 \begin{align*}
    \sum_{i \colon \delta_i = \lambda} |\bb_i| 
    & \weq 2 \lambda \sum_{ i \colon \delta_i = \lambda } \left| \sum_{j \in \ell_0} \tv_{ij} \right|  
    \wle 2 \lambda \sum_{i \in \ell} \sum_{j \in \ell_0} |\tv_{ij}| 
    \wle 2 \lambda \sum_{j \in \ell_0} \sqrt{|\ell|} \sqrt{\sum_{i\in \ell} |\tv_{ij}|^2} 
    \wle 2 \lambda |\ell_0| \sqrt{ |\ell|},
 \end{align*}
 where the last inequality follows from the fact that the matrix $(\tv_{ij})_{i,j}$ is orthogonal. 
 Hence, using $\bb_1 \wge \sqrt{n} \frac{\lambda (\eta_1 - \eta_0)}{2} \frac{ \bar{ \alpha } }{ \lambda + \bar{ \alpha } }$, $|\ell_0| = \eta_0$, and $|\ell| = (\eta_0+\eta_1) n$, we obtain  
 \begin{align*}
   \frac{ \sum_{i \colon \delta_i = \lambda} |\bb_i| }{ |\bb_1|^2 } \wle 4 \eta_0 \sqrt{n} \frac{\sqrt{\eta_1+\eta_0}}{ \eta_1-\eta_0} \frac{\lambda+\alpha}{\alpha}. 
 \end{align*}
 Combining the latter inequality with~\eqref{eq:first_inequation} leads to the desired result. 
\end{proof}

\section{Mean-field solution}
\label{appendix:solution_mean_field}

In this section, we calculate the solution $\bar{x}$ to the mean-field model and deduce from it the conditions to recover the clusters.

 \begin{proposition}
 \label{prop:noisy_data_MF_solution} 
  Suppose that $\tau > \pout$. Then the solution of Equation~\eqref{eq:SSL_solution_minimization_noisy} on the mean-field DC-SBM is the vector $\bar{x}$ whose element $\bar{x}_i$ is given by
  \begin{align*} 
   \bar{x}_i \weq 
   \begin{cases}
   C \left( - 1 + (\eta_1 - \eta_0) \bar{\alpha} B \right) Z_i, & \quad \text{if } i \in \ell \text{ and } s_i \not = Z_i, \\
   C \left( 1 + (\eta_1 - \eta_0) \bar{\alpha} B \right) Z_i, & \quad \text{if } i \in \ell \text{ and } s_i = Z_i, \\
   \frac{- \bar{\alpha} C}{ \bar{ \alpha } (1-\eta_1 - \eta_0) + \bar{\gamma}_* }  (\eta_1 - \eta_0) \left( 1 + (\eta_1 + \eta_0) \bar{\alpha} B \right) Z_i ,  & \quad \text{if } i \not \in \ell,	\end{cases}
  \end{align*}
  where $\bar{\alpha} = \frac{n}{2}(\pin - \pout)$,
  $B = \frac{ \bar{\alpha} \bgamma_* }{ \lambda \bar{\alpha} (1-\eta_1-\eta_0) + \bgamma_*(\lambda - \bar{ \alpha } - \bgamma_*)}$ and $C = \frac{\lambda}{\lambda - \bgamma_*}$.
 \end{proposition}
	
\begin{proof}
Let $\bar{x}$ be a solution of Equation~\eqref{eq:SSL_solution_minimization_noisy}. By symmetry, we have
\begin{align*}
\bar{x}_{i} \weq
\begin{cases}
x_t \, Z_i, & \quad \text{if } i \in [\ell] \text{ and } \bs_i = Z_i, \\
x_f \, Z_i, &\quad \text{if } i \in [\ell] \text{ and } \bs_i = - Z_i, \\
x_0 \, Z_i,   &\quad \text{if } i \not \in [\ell],
\end{cases}
\end{align*}
where $x_t$, $x_f$ and $x_0$ are unknowns to be determined.
Since for every $i \in [n]$
\begin{align*}
    \left( \E A_\tau \bar{x} \right)_i \weq \bar{ \alpha } \left( x_0 (1-\eta_1 - \eta_0) + x_t \eta_1 + x_f \eta_0 \right),
\end{align*}
the linear system composed of the equations
$
\big( \left( -\E A_\tau + \lambda \cP - \bgamma_* I_n \right) \bx \big)_i \weq \lambda s_i 
$ 
for all $i \in [n]$ leads to the system
\begin{align*}
\begin{cases}
- \balpha \left(  (1-\eta_1 - \eta_0) x_0 + x_t \eta_1 + x_f \eta_0 \right) - \bgamma_* x_0 & \weq 0, \\
- \balpha \left(  (1-\eta_1 - \eta_0) x_0 + x_t \eta_1 + x_f \eta_0 \right) - \bgamma_* x_t + \lambda x_t & \weq \lambda, \\
- \balpha \left(  (1-\eta_1 - \eta_0) x_0 + x_t \eta_1 + x_f \eta_0 \right) - \bgamma_* x_f + \lambda x_f & \weq - \lambda.
\end{cases}
\end{align*}
The rows of the latter system correspond to a node unlabeled by the oracle, correctly labeled and falsely labeled, respectively. 
This system can be rewritten as follows:
\begin{align*}
\begin{cases}
x_0 & \weq \frac{ - \bar{ \alpha } }{ \bar{ \alpha } (1-\eta_1-\eta_0) + \bgamma_* }  \left( \eta_1 x_t + \eta_0 x_f \right), \\
\bgamma_* x_0 + x_t (\lambda - \bgamma_*) & \weq \lambda, \\
\bgamma_* x_0 + x_f (\lambda - \bgamma_*) & \weq - \lambda.
\end{cases}
\end{align*}
In particular, we have
$
 x_t - x_f \weq \frac{ 2\lambda }{ \lambda - \bgamma_* }.
$
By subsequently eliminating $x_0$ and $x_t$ in the equation $\bgamma_* x_0 + x_f (\lambda - \bgamma_*) = - \lambda$, 
we find 
\begin{align*}
 x_f & \weq \frac{\lambda}{\lambda - \bgamma_*} \left( -1 + \frac{ \bar{ \alpha } \bgamma_* \left( \eta_1 - \eta_0 \right) }{ \lambda \bar{ \alpha } (1-\eta_1 - \eta_0) + \lambda \bgamma_* - \bgamma_* ( \bar{ \alpha } + \bgamma_* ) } \right), \\
 x_t & \weq \frac{ \lambda }{ \lambda - \bgamma_* } \left( 1 + \frac{ \bar{ \alpha } \bgamma_* \left( \eta_1 - \eta_0 \right)}{ \lambda \bar{ \alpha } (1-\eta_1 - \eta_0) + \lambda \bgamma_* - \bgamma_* ( \bar{ \alpha } + \bgamma_* ) }\right),
\end{align*}
and finally
\begin{align*}
 x_0 \weq \frac{ - \balpha }{ \bar{ \alpha }(1-\eta_1-\eta_0) + \bgamma_* } \cdot \frac{ \lambda }{ \lambda - \bgamma_*} \left( 1 + \frac{ \balpha \bgamma_* \left( \eta_1 + \eta_0 \right)}{ \lambda \balpha (1-\eta_1 - \eta_0) + \lambda \bgamma_* - \bgamma_* ( \balpha + \bgamma_* ) }\right).
\end{align*}
\end{proof}

\begin{corollary}
\label{cor:recovery_with_the_mean_field}
 Suppose that $\tau > \pout $.
 Then $\mathrm{sign}\left( \bx_i \right) = \mathrm{sign}\left( Z_i \right)$ if
 \begin{itemize}
  \item node $i$ is not labeled by the oracle;
  \item node $i$ is correctly labeled by the oracle;
  \item node $i$ is mislabeled by the oracle and $\lambda < (1-2\eta_0) \bar{\alpha} \frac{\eta_1-\eta_0}{\eta_1+\eta_0}$.
 \end{itemize}
\end{corollary}
	
\begin{proof}
 A node $i$ is correctly classified by decision rule~\eqref{detection_criteria} if the sign of~$\bx_i$ is equal to the sign of~$Z_i$.
 Using Lemma~\ref{lemma:bounding_gamma_star_mf} in Appendix~\ref{appendix:estimation_gamma_starMF}, we have $- \balpha \le \bgamma_* \le -\balpha (1 - 2 \eta_0)$. 
 Therefore, the quantities $B$ and $C$ in Proposition~\ref{prop:noisy_data_MF_solution} verify $C \ge 0$ and $\frac{1-2\eta_0}{\lambda(\eta_0+\eta_1)}
  \le B 
  \le \frac{1}{\lambda(\eta_0+\eta_1)}$.
 The statement then follows from the expression of $\bar{x}_i$ computed in Proposition~\ref{prop:noisy_data_MF_solution}.
\end{proof}

\section{Cost comparison in the constrained eigenvalue problem}

\begin{lemma}
 \label{lemma:diff_cost}
 Let $(\gamma_1,x_1)$ and $(\gamma_2,x_2)$ be two solutions of the system~\eqref{eq:constraint_linear_system}, and denote by $\cC(x) = - x^T A_\tau x + \lambda (s-\cP x)^T (s-\cP x)$ the cost function minimized in~\eqref{eq:MAP_relaxed}. Then, we have 
\begin{align*}
 \cC (x_1) - \cC (x_2) \weq \frac{ \gamma_1 - \gamma_2 }{2} \, \left\| x_1 - x_2 \right\|^2.
\end{align*}
\end{lemma}

\begin{proof}
Because $(\gamma_1,x_1)$ and $(\gamma_2,x_2)$ are solutions of \eqref{eq:constraint_linear_system}, it holds that
\begin{align}
\label{eq:in_proof_cost1}
 \left( - A_\tau + \lambda \cP \right)  x_1 \weq & \gamma_1 x_1 + \lambda s, \\
 \label{eq:in_proof_cost2}
  \left( - A_\tau + \lambda \cP \right)  x_2 \weq & \gamma_2 x_2 + \lambda s,
\end{align}
as well as $\|x_1\|^2 = \|x_2\|^2=n$.  
Thus, we notice that
\begin{align*}
 \cC(x_1) & \weq x_1^T \left( - A_{\tau} + \lambda \cP \right) x_1 + \lambda s^T s - 2 \lambda \, x_1^T \cP s \\
 & \weq - \lambda x_1^T s + \gamma_1 n + \lambda s^T s,
\end{align*}
where we used $\cP s = s$ and the fact that $(\gamma_1, x_1 )$ is a solution of the system~\eqref{eq:constraint_linear_system}. Therefore, 
\begin{align*}
 \cC(x_1) - \cC(x_2) \weq \left( \gamma_1 - \gamma_2 \right) n + \lambda \left( x_2 - x_1 \right)^T s.
\end{align*}
Finally, by multiplying on the left Equation~\eqref{eq:in_proof_cost1} by $x_2^T$ (resp., Equation~\eqref{eq:in_proof_cost2} by $x_1^T$), we obtain
\begin{align*}
 \lambda x_2^T s & \weq x_2^T \left( - A_\tau + \lambda \cP \right)  x_1 - \gamma_1 x_2^T x_1 , \\
 \lambda x_1^T s & \weq x_1^T \left( - A_\tau + \lambda \cP \right)  x_2 - \gamma_2 x_1^T x_2.\\
\end{align*}
Thus,
\begin{align*}
  \cC(x_1) - \cC(x_2) 
  & \weq \left( \gamma_1 - \gamma_2 \right) \left( n - x_1^T x_2 \right) \\
  & \weq \frac{ \gamma_1 - \gamma_2 }{ 2 } \left( \|x_1\|^2 + \|x_2\|^2 - 2 x_1^T x_2 \right) \\
  & \weq \frac{ \gamma_1 - \gamma_2 }{2} \, \left\| x_1 - x_2 \right\|^2,
\end{align*}
where we used the constraints $\|x_1\|^2 = \|x_2\|^2=n$.
\end{proof}
\color{black}

\paragraph{Funding statement}
This research has been done within the project of Inria - Nokia Bell Labs ``Distributed Learning and
Control for Network Analysis''.

\paragraph{Data availability statement}
Data can be found for instance at Stanford Large Network Dataset Collection \url{https://snap.stanford.edu/data/} and the code is available on GitHub \url{https://github.com/mdreveton/ssl-sbm}.

\end{document}